\declaretheoremstyle[%
  spaceabove=3pt,%
  spacebelow=3pt,%
  headfont=\normalfont\bfseries,%
  bodyfont=\normalfont\itshape,%
  postheadspace=0.5em,%
]{theoremstyle} 
\declaretheorem[name={Definition},style=theoremstyle]{definition}
\declaretheorem[name={Theorem},style=theoremstyle]{theorem}
\declaretheorem[name={Corollary},style=theoremstyle]{corollary}
\begin{document}


\title[CAMs as Shapley Value-based Explainers]
{CAMs as Shapley Value-based Explainers}







\author[1,2]{\fnm{Huaiguang} \sur{Cai}}\email{caihuaiguang2022@ia.ac.cn}

\affil[1]{\orgdiv{State Key Laboratory of Multimodal Artificial Intelligence Systems}, \orgname{Institute of Automation, Chinese Academy of Sciences}, \orgaddress{ \city{Beijing}, \country{China}}}

\affil[2]{\orgdiv{School of Artificial Intelligence}, \orgname{University of Chinese Academy of Sciences}, \orgaddress{ \city{Beijing}, \country{China}}}

\abstract{
Class Activation Mapping (CAM) methods are widely used to visualize neural network decisions, yet their underlying mechanisms remain incompletely understood. To enhance the understanding of CAM methods and improve their explainability, we introduce the Content Reserved Game-theoretic (CRG) Explainer. This theoretical framework clarifies the theoretical foundations of GradCAM and HiResCAM by modeling the neural network prediction process as a cooperative game. Within this framework, we develop ShapleyCAM, a new method that leverages gradients and the Hessian matrix to provide more precise and theoretically grounded visual explanations. Due to the computational infeasibility of exact Shapley value calculation, ShapleyCAM employs a second-order Taylor expansion of the cooperative game's utility function to derive a closed-form expression. Additionally, we propose the Residual Softmax Target-Class (ReST) utility function to address the limitations of pre-softmax and post-softmax scores. Extensive experiments across 12 popular networks on the ImageNet validation set demonstrate the effectiveness of ShapleyCAM and its variants. Our findings not only advance CAM explainability but also bridge the gap between heuristic-driven CAM methods and compute-intensive Shapley value-based methods. The code is available at \url{https://github.com/caihuaiguang/pytorch-shapley-cam}.
}
\keywords{explainable AI, class activation mapping (CAM), Shapley value, decision-making, game theory.}

\maketitle

\section{Introduction}
With the increasing reliance on machine learning models in critical fields such as healthcare diagnostics \cite{Julia2020EAIhealth} and autonomous driving \cite{daniel2022EAIautonomous}, the need for explainable AI (XAI) has never been more pressing. As these models are deployed in environments where human lives and safety are at stake, it becomes essential to understand the mechanism behind their predictions. To ensure these models are reliable and transparent, it is crucial to interpret their predictions as a decision-making process. This perspective is motivated by a basic intuition: for accurate predictions, knowing which features the network primarily relied upon helps ensure its behavior aligns with human logic \cite{zhou2016learning}. In cases of incorrect predictions, identifying the features driving the error can aid in detecting biases and debugging the model \cite{hires2020rachel,selvaraju2017grad}.

To achieve such explainability, CAM methods \cite{zhou2016learning, hires2020rachel, selvaraju2017grad, wang2020score, abalation2020desai, chattopadhay2018grad, fu2020axiom, jiang2021layercam, liftcam2021jung, jacobgilpytorchcam} have gained prominence. These methods generate visual explanations by identifying which regions of the input image most influence a model’s output. However, we find many methods, including GradCAM++ \cite{chattopadhay2018grad}, LayerCAM \cite{jiang2021layercam}, and GradCAM-E \cite{jacobgilpytorchcam}, often confuse localization ability with true explainability—a critical distinction that is frequently overlooked, as shown in Fig. \ref{fig:main}. More importantly, CAM methods frequently rely heavily on heuristics, and the absence of a solid theoretical foundation has become a major obstacle to their further development.


Meanwhile, the Shapley value \cite{shapley1953value} from cooperative game theory offers a well-established theoretical framework for fairly quantifying feature attribution \cite{roz2022shapleysurvey}. The Shapley value has been successfully applied in XAI like SHAP \cite{LundbergL17SHAP} and Data Shapley \cite{zou2019datashapley}. SHAP attributes the model's inference result to each input feature, and Data Shapley attributes the model's training performance to each training data point. However, the exponential complexity of computing exact Shapley values presents a significant practical challenge, limiting their scalability for high-dimensional input data or large datasets \cite{wang2024data}.

To enhance the understanding and applicability of CAM methods, we introduce the CRG Explainer. This theoretical framework marries the great scalability of CAM methods with the solid theoretical underpinning of the Shapley value. Within the CRG Explainer, we propose ShapleyCAM, a novel Shapley value-based CAM method designed to offer improved explainability of the decision-making process of neural networks. By bridging the gap between heuristic-driven CAM methods and compute-intensive Shapley value-based methods, ShapleyCAM ensures both scalability and fairness in feature attribution. The key contributions of this paper include:

\begin{itemize} 
\item \textbf{Content Reserved Game-theoretic Explainer:} This theoretical framework generalizes CAM methods by incorporating Shapley value, clarifying the theoretical basis of GradCAM (satisfying the content reserved property) and HiResCAM (satisfying the game-theoretic property), thereby establishing a connection between CAM methods and Shapley value-based methods.
\item \textbf{ShapleyCAM algorithm:} Within the CRG Explainer and based on a second-order approximation of the utility function, we develop ShapleyCAM, leveraging the gradient and Hessian matrix of neural networks to generate more accurate explanations. 
\item \textbf{ReST utility function:} We analyze the advantages and limitations of using pre and post-softmax scores, elucidating their theoretical relationship, and introduce the ReST utility function to overcome these limitations.
\item \textbf{Extensive empirical validation:} We conduct comprehensive experiments across 12 network architectures and 6 metrics on the ImageNet validation set, providing a thorough comparison of existing gradient-based CAM methods with ShapleyCAM and its variants, and demonstrating that incorporating both the gradient and Hessian matrix typically results in more accurate explanations.
\end{itemize}

\section{Related Work}

In this section, we introduce two representative types of methods in feature attribution: CAM methods and Shapley value-based methods. CAM methods generate heatmaps to show which regions of an image most influence the model's predictions, while Shapley value-based methods quantify the precise numerical contributions of individual features.


\begin{figure*}[htbp]
    \centering
    \includegraphics[width=\textwidth]{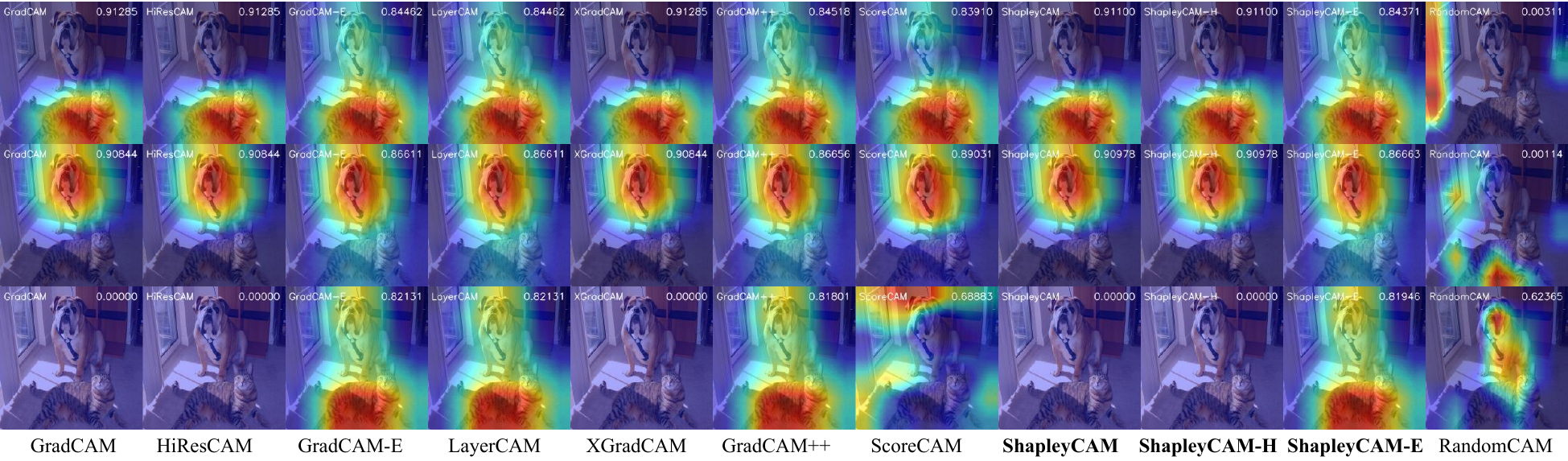} 
\caption{Visual explanation on ResNet-18 generated by various CAM methods using the ReST, with the layer preceding GAP as the target layer. The target classes from top to bottom are: \textit{tiger cat}, \textit{boxer}, and \textit{yellow lady's slipper} (the least likely class). A good explainer should avoid focusing on \textit{tiger cat} or \textit{boxer} when \textit{yellow lady's slipper} is the target class. Higher ADCC scores (top-right) indicate better performance.}
    \label{fig:main}
\end{figure*}
\subsection{CAM Methods}

CAM was proposed as a weakly supervised object localization approach based on the discovery that the global average pooling (GAP) layer actually exposes the implicit attention of convolutional neural networks (CNNs) on an image~\cite{zhou2016learning}.  Then, CAM also serves as a foundational explainable technique for visualizing CNN decision-making processes by highlighting the image regions that are critical to the model's predictions.

Specifically, after a well-trained neural network makes a class prediction on an image \(x\), CAM generates a heatmap \(\text{CAM}_c(x) \) for the target class \(c\). This heatmap is produced by linearly combining the activation maps  \(\{A^i\}_{i=1}^{N_l}\) of the target layer \(l\), typically the layer preceding GAP or the final convolutional layer where $N_l$ denotes the number of channels of layer \(l\), with the linear coefficients \(\{w^i\}_{i=1}^{N_l}\) being the weights associated with the target class in the fully connected layer following GAP. More formally:
\begin{equation}
    \text{CAM}_c(x) =  \sum_{i=1}^{N_l} w^i A^i.
\end{equation}
The heatmap is then normalized, upsampled to the original image size, and overlaid onto the original image to create a visual explanation (similar to Fig. \ref{fig:main}) of the model prediction. This post-processing step has become a standard operation in subsequent works and will therefore be omitted in the following. Notably, the fully connected layer outputs logits \(\{y^c\}_{c=1}^C\) (i.e., pre-softmax scores) for each class, which are typically transformed into probabilities \(\{p^c\}_{c=1}^C\) (i.e., post-softmax scores) via softmax function.

The original CAM is limited to architectures where GAP is followed by a fully connected layer functioning as a classifier \cite{minh2023overview}. GradCAM~\cite{selvaraju2017grad} generalizes CAM to any CNN architecture by using the gradient \( W^i = \frac{\partial y^c}{\partial A^i} \) of the pre-softmax score \( y^c \) with respect to the activation map \( A^i \) to compute the importance of the \( i \)-th activation map. Note that, GradCAM typically uses the output logit \(y^c\) to generate its heatmap. The gradients are averaged to weight the activation maps, and a ReLU operation is introduced to remove the negative regions, producing the final heatmap. Denoting \(\overline{W^i}\) as the mean of \( W^i \), and the heatmap is generated as follows:
\begin{equation}\label{eq:gradcam}
    \text{GradCAM}_c(x) =  \text{ReLU}\left(\sum_{i=1}^{N_l} \overline{W^i} A^i\right).
\end{equation}
HiResCAM~\cite{hires2020rachel} provides a more faithful and finer-grain explanation by element-wise multiplying the activations with the gradients, where \(\odot\) refers to Hadamard product:
\begin{equation}\label{eq:hirescam}
    \text{HiResCAM}_c(x) =  \text{ReLU}\left(\sum_{i=1}^{N_l} W^i \odot A^i\right).
\end{equation}
GradCAM Elementwise \cite{jacobgilpytorchcam} modifies HiResCAM by adding a ReLU operation to the front of the summation:
\begin{equation}\label{eq:gradcam-e}
    \text{GradCAM-E}_c(x) =  \text{ReLU}\left(\sum_{i=1}^{N_l} \text{ReLU}(W^i \odot  A^i)\right).
\end{equation}
LayerCAM~\cite{jiang2021layercam} further refines HiResCAM by adding a ReLU to the gradients before the Hadamard product:
\begin{equation}\label{eq:layercam}
    \text{LayerCAM}_c(x) =  \text{ReLU}\left(\sum_{i=1}^{N_l} \text{ReLU}(W^i) \odot A^i\right).
\end{equation}
XGradCAM \cite{fu2020axiom} was introduced to enhance the sensitivity and consistency properties of GradCAM. It generates the heatmap as follows, where \(\overline{X}\) denotes the mean of \(X\):
\begin{equation}
    \text{XGradCAM}_c(x) =  \text{ReLU}\left(\sum_{i=1}^{N_l} \frac{\overline{W^i \odot  A^i}}{\overline{A^i}} A^i\right).
\end{equation}



GradCAM++~\cite{chattopadhay2018grad} generates heatmaps by focusing on positive gradients, like LayerCAM, while incorporating higher-order derivatives. For CNNs with a GAP layer, it provides a closed-form solution for the weights. To avoid high-order derivative computation, it approximates second and third-order derivatives using squared and cubed gradients, which holds only when logits are passed through an exponential function.
LIFTCAM \cite{liftcam2021jung} estimates weights by evaluating the contribution of each activation map using DeepLIFT through a single backward pass. The accuracy of the weights is limited by the precision of the explanations provided by DeepLIFT.

Thus far, all introduced CAM methods are gradient-based (CAM can also be considered gradient-based \cite{selvaraju2017grad}). However, noisy or vanishing gradients in deep networks can undermine meaningful explanations \cite{wang2020score}, as also noted in Section \ref{subsec:softmax}. To address this, gradient-free methods have been proposed. 
ScoreCAM \cite{wang2020score} generates heatmaps by overlapping normalized activation maps with the input image and determines weights by applying softmax to the \(N_l\) output logits corresponding to the target class. AblationCAM \cite{abalation2020desai} assigns weights based on the decrease in the target output when each associated activation map is set to zero. Although gradient-free CAMs sometimes provide a more accurate explanation, the requirement of \(N_l\) forward propagations, typically hundreds of times \cite{torcham2020} more time-consuming and resource-intensive than gradient-based CAMs, hinders their use with large datasets.

In summary, CAM methods are known for their efficiency and are widely used in explainability. However, they share a common limitation: reliance on heuristics and a lack of a solid theoretical framework. This paper aims to address this issue.

\subsection{Shapley Value-Based Methods}
The Shapley value \cite{shapley1953value}, a fundamental concept from cooperative game theory \cite{algaba2019handbook}, has become widely utilized in machine learning for attributing contributions and ensuring fairness \cite{roz2022shapleysurvey}. It is particularly valued for its ability to fairly allocate the total utility $U(D)$ (such as revenue, cost, or even the output probability of a model) among all players in $D$ by assessing each player’s marginal contribution across all possible player subsets. Specifically, the Shapley value is the unique solution concept that satisfies the following four axioms:

\begin{itemize}
    \item \textbf{Dummy player:} If \( U(S \cup \{i\}) = U(S)\) for all \( S \subseteq D \setminus \{i\} \), then \( \phi(i; U) = 0 \).
    \item \textbf{Symmetry:} If \( U(S \cup \{i\}) = U(S \cup \{j\}) \) for all \( S \subseteq D \setminus \{i, j\} \), then \( \phi(i; U) = \phi(j; U) \).
    \item \textbf{Efficiency:} \( \sum_{i \in D} \phi(i; U) = U(D) -U(\emptyset) \).
    \item \textbf{Linearity:} For utility functions \( U_1, U_2 \) and any \( \alpha_1, \alpha_2 \in \mathbb{R} \), \( \phi(i; \alpha_1 U_1 + \alpha_2 U_2) = \alpha_1 \phi(i; U_1) + \alpha_2 \phi(i; U_2) \).
\end{itemize}

\begin{definition}[\textbf{Shapley, 1953}\cite{shapley1953value}]
\label{def:shapley}
\textit{Given a player set $D$ with $n = |D|$ and a utility function $U$, the Shapley value for each player $i \in D$ is defined as:} 
\begin{equation}\label{eq:shapley}
    \phi\left(i; U \right) = \frac{1}{n} \sum_{k=1}^{n} \binom{n-1}{k-1}^{-1} 
    \sum_{\substack{S \subseteq D\setminus \{i\} \\ |S|=k-1}} \left[ U(S \cup \{i\}) - U(S) \right].
\end{equation}
A single-valued solution concept satisfies the axioms of dummy player, symmetry, efficiency, and linearity if and only if it is the Shapley value.
\end{definition}

A more intuitive form for the Shapley value is:
\begin{equation}
\phi(i; U) = \mathbb{E}_{\pi \sim \Pi} \left[ U\left( S_{\pi}^i \cup \{i\} \right) - U\left( S_{\pi}^i \right) \right].
\end{equation}
where \( \pi \sim \Pi \) represents a uniformly random permutation of the set of players \(D\), and \( S_{\pi}^i \) denotes the set of players that precede player \( i \) in permutation \( \pi \). This form reveals that the Shapley value of player \( i \) measures the expected marginal contribution across all possible subsets, as shown in Fig. \ref{fig:shapley_value}.
In the context of feature attribution, the Shapley value ensures a fair distribution of utility among features. The axiom of \emph{Dummy Player} assigns zero contribution to features that contribute nothing in any subset. \emph{Symmetry} guarantees that features with equal marginal utility across all subsets receive equal contribution. \emph{Efficiency} ensures that the total utility is fully distributed among all features. \emph{Linearity} implies that if the utility function is a linear combination of two functions, the contribution assigned to a feature is also the linear combination of its contributions with respect to those functions.

Next, we introduce two prominent applications of the Shapley value in explainable AI: SHAP \cite{LundbergL17SHAP} and Data Shapley \cite{zou2019datashapley}. These methods study feature attribution in model inference and data attribution in model training, respectively.

SHAP (SHapley Additive exPlanations) \cite{LundbergL17SHAP} treats the model's output as a utility function and the \( n \) input features as players, approximating features' Shapley value to explain model predictions. SHAP offers a qualitative understanding of feature contributions and is widely adopted by data analysis and healthcare \cite{roz2022shapleysurvey}. However, computing the exact Shapley values requires \(\mathcal{O}(2^n)\) model forward passes. Approximation techniques like Kernel SHAP \cite{LundbergL17SHAP} address this by solving a weighted least squares problem, but the cost remains significant, scaling as at least \(\mathcal{O}(c n)\), where \( c \) is non-negligible. In contrast, CAM methods require only a single forward and backward pass, making them more practical for high-dimensional data like images. Nevertheless, SHAP remains a cornerstone in explainable AI due to the Shapley value, which is widely regarded as the fairest method for utility allocation \cite{algaba2019handbook, roz2022shapleysurvey}, whereas CAMs lack a similarly theoretical basis.

\vspace{-0.20cm}
\begin{figure}[htbp]
    \centering
    \includegraphics[width=0.35\textwidth]{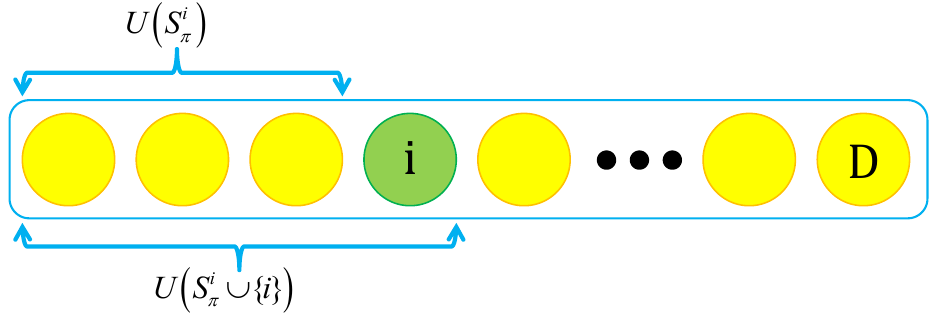} 
    \caption{Illustration of the calculation process of Shapley value}
    \label{fig:shapley_value}
\end{figure}

Data Shapley \cite{zou2019datashapley} focuses on data attribution during model training by quantifying the contribution of each data point to a model’s performance. This approach aids in identifying both valuable and noisy data points and supports downstream tasks such as data selection, acquisition, and cleaning. It also supports the development of data markets \cite{Mazumder2023dataperf}. Data Shapley employs Monte Carlo methods \cite{zou2019datashapley} to estimate the value of each data point, with $\mathcal{O}(n^2 \log n)$ time of model retraining, where $n$ is the number of data points. This complexity limits its scalability to larger datasets such as CIFAR-10. 
A recent work, In-Run Data Shapley \cite{wang2024data}, explores the use of second-order Taylor expansions to approximate the utility function of model performance, enabling a closed-form solution for estimating the Shapley value of each training data point.

In summary, Shapley value-based methods are recognized for their solid theoretical foundations, but computational challenges limit their widespread use. In this work, we extend the idea of directly estimating the utility function from In-Run Data Shapley \cite{wang2024data} to feature attribution. The key insight is that if the inference result for a subset of features can be expressed analytically, a closed-form solution for each feature's Shapley value may be derived, enabling a more computationally efficient approach.

\section{Content Reserved Game-theoretic Explainer}
\label{sec:crg}
In this section, we first formally define the CRG Explainer. We then use this theoretical framework to clarify the theoretical foundations of GradCAM and HiResCAM, support the development of new CAM methods like ShapleyCAM and its variants, and answer the question of ReLU placement. Additionally, as the choice of the utility function is central to cooperating game theory, we analyze the advantages and limitations of pre and post-softmax scores on the explanations, reveal their relationship, and propose the ReST utility function to overcome these limitations.

\subsection{Definition of Content Reserved Game-Theoretic Explainer}
The key difference between SHAP and CAMs lies in their inputs. SHAP operates on the raw image, computing Shapley values for individual pixels, while CAMs use \(N\) \(d\)-dimensional activation maps \(\{A^i\}_{i=1}^N\) extracted from a target layer during inference to generate a \(d\)-dimensional explanation. The motivation behind the CRG Explainer is to reinterpret CAMs in a manner similar to SHAP.

Since each activation map can be viewed as a transformation of the original image that preserves its spatial structure \cite{zhou2016learning}, we treat \( \{A^i\}_{i=1}^N \) as \(N\) variants of a downsampled \(d\)-dimensional raw image. The cooperative game of model prediction then occurs at each pixel of this downsampled image, or equivalently, at each group of the same position in \( \{A^i\}_{i=1}^N \). Next, we present our theoretical framework for CAM methods:

\begin{definition}
\label{def:CRG}
Given \(N\) \(d\)-dimensional vectors \(\{A^i\}_{i=1}^N\) that cooperate to achieve the scalar utility \( U(D)\), the group of \(j\)-th elements in each \(A^i\), i.e., \(\{A^i_j\}_{i=1}^N\), is treated as the \(j\)-th player, with the player set $D := \{ j \}_{j=1}^d$. If the Shapley value of the \(j\)-th player is \(\phi(j;U) = \sum_{i=1}^N W^i_j A^i_j\), a \textbf{Content Reserved Game-theoretic (CRG) Explainer} generates a \(d\)-dimensional explanation vector \( E = \sum_{i=1}^N g(W^i) \odot A^i \), where \(W^i\) is a \(d\)-dimensional vector, \(g\) is a mapping function, and \(\odot\) denotes the Hadamard product. The explainer is classified as follows:
\begin{itemize}
    \item \textbf{Type-I CRG Explainer} satisfies the \textbf{game-theoretic} property: \(g(W^i) = W^i\). 
    \item \textbf{Type-II CRG Explainer} satisfies the \textbf{content reserved} property: \(g(W^i) = \overline{W^i} \mathbf{1}_d\), where \(\overline{W^i}\) is the mean of \(W^i\).
    \item \textbf{Optimal CRG Explainer} satisfies both \textbf{content reserved} and \textbf{game-theoretic} properties: \(\forall i, W^i = \overline{W^i} \mathbf{1}_d\).
\end{itemize}
\end{definition}

The Type-I CRG Explainer produces a faithful explanation of the cooperative game among the \(d\) players, as the value of the \(j\)-th pixel in \(E\) is also the Shapley value of the \(j\)-th player: \(E_j =  [\sum_{i=1}^N W^i \odot A^i]_j  = \phi(j;U)\). Additionally, since \( \langle W^i \odot A^i, \mathbf{1}_d \rangle = \langle W^i, A^i \rangle \), we have \( \langle E, \mathbf{1}_d \rangle = \langle \sum_{i=1}^N W^i \odot A^i, \mathbf{1}_d \rangle = \sum_{i=1}^N \langle W^i, A^i \rangle =\sum_{j=1}^d \phi(j;U) =  U(\{A^i\}_{i=1}^N) - U(\emptyset)\), where the last equality follows from the Efficiency axiom. Hence, the explanation produced is a rearrangement of the total utility for each pixel.
However, \(W^i\) may be inaccurate due to noisy or vanishing gradients \cite{wang2020score}, while \(A^i\) is more dependable as it maintains the spatial structure of the raw image \cite{zhou2016learning}. The Type-II CRG Explainer seeks to reserve more content from \(\{A^i\}_{i=1}^N\) by weighting them with scalars \( \{\overline{W^i}\}_{i=1}^N\), albeit with a potential sacrifice of the game-theoretic property. 
Lastly, the Optimal CRG Explainer occurs when the layer before the GAP layer is chosen as the target layer, as all gradients on the same activation map are identical and equal to \(1/d\) of the gradient of the pooled score in the GAP layer. Detailed proof is available in \cite{selvaraju2017grad}. Here, ``optimal" indicates that this CRG Explainer simultaneously satisfies both the content reserved and game-theoretic properties, achieving the best of both worlds.

\subsection{Theoretical Foundation of GradCAM and HiResCAM}

Within the CRG Explainer, we establish the theoretical foundation for GradCAM and HiResCAM by approximating the utility function with a first-order Taylor expansion.

\begin{theorem}\label{thm:hirescam}
When using first-order Taylor expansion to approximate the utility function, HiResCAM is a Type-I CRG Explainer, and GradCAM is a Type-II CRG Explainer. Both GradCAM and HiResCAM are Optimal CRG Explainers if the target layer is the layer preceding the GAP layer.
\end{theorem}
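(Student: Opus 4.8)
The plan is to show that a first-order Taylor approximation turns the cooperative game among the $d$ positional players into an \emph{additive} (inessential) game, so that each player's Shapley value collapses to its own coalition-independent marginal contribution, and that this contribution has exactly the form $\phi(j;U)=\sum_{i=1}^N W_j^i A_j^i$ required by Definition \ref{def:CRG}. Once this is established, the three assertions reduce to pattern-matching the heatmap formulas against the Type-I/Type-II templates.

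First I would fix the coalition semantics. For $S\subseteq D$, turning player $j$ ``on'' means retaining the values $\{A_j^i\}_{i=1}^N$ at position $j$, and turning it ``off'' means resetting them to the baseline $0$; thus $U(S)$ is $U$ evaluated on the masked activations $a_S$ that agree with $\{A^i\}_{i=1}^N$ on positions in $S$ and vanish elsewhere, with $U(\emptyset)=U(a_\emptyset)$. Writing $W_j^i:=\partial U/\partial A_j^i$ for the gradient of the utility (which is precisely the CAM gradient $W^i=\partial y^c/\partial A^i$ when $U=y^c$), the natural first-order expansion about the realized activations $a_D$ gives the affine surrogate $U(a_S)\approx U(a_D)-\sum_{j\notin S}\sum_{i=1}^N W_j^i A_j^i$.

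Next I would compute the marginal contribution of player $j$ to any coalition $S$ with $j\notin S$. Since the affine surrogate is a sum of independent per-position terms, every term cancels in $U(a_{S\cup\{j\}})-U(a_S)$ except the $j$-th, leaving $\sum_{i=1}^N W_j^i A_j^i$, which is \emph{independent of $S$}. Substituting this constant into the Shapley value of Definition \ref{def:shapley} and using that the coalition weights sum to one for each player yields $\phi(j;U)=\sum_{i=1}^N W_j^i A_j^i$, matching the hypothesis of Definition \ref{def:CRG}; as a consistency check, $\sum_{j=1}^d\phi(j;U)=\sum_{i=1}^N\langle W^i,A^i\rangle\approx U(a_D)-U(\emptyset)$, in agreement with the Efficiency axiom. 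With the Shapley values identified, I would drop the post-processing ReLU (which lies outside the CRG template) and read off: HiResCAM in Eq. \ref{eq:hirescam} equals $\sum_{i=1}^N W^i\odot A^i$, i.e. $g(W^i)=W^i$, so its $j$-th entry is literally $\phi(j;U)$ and it is a Type-I CRG Explainer; GradCAM in Eq. \ref{eq:gradcam} equals $\sum_{i=1}^N\overline{W^i}A^i$, i.e. $g(W^i)=\overline{W^i}\mathbf 1_d$, so it is a Type-II CRG Explainer. For the optimal case I would invoke the result of \cite{selvaraju2017grad} that, when the target layer precedes GAP, every entry of $W^i$ equals $1/d$ of the gradient of the pooled score, hence $W^i=\overline{W^i}\mathbf 1_d$ for all $i$; the Type-I and Type-II forms then coincide, so both methods satisfy the content reserved and game-theoretic properties and are Optimal CRG Explainers.

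I expect the main obstacle to be conceptual rather than computational: pinning down the coalition-masking convention and the expansion point so that the linearized marginal contributions are genuinely coalition-independent, since this additivity is exactly what trivializes the Shapley summation. Care is also needed to justify that the ReLU and the subsequent normalization and upsampling are post-processing operations not covered by the CRG template, so that the matching must be performed against the pre-ReLU heatmaps rather than the displayed formulas.
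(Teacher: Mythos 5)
Your proposal is correct and takes essentially the same route as the paper's proof: the same zero-masking coalition convention with first-order expansion at the full activations $X_D$, the same observation that the linearized marginal contribution $\sum_{i=1}^N W^i_j A^i_j$ is coalition-independent so the Shapley sum collapses to $\phi(j;U)=\sum_{i=1}^N [U'(X_D)]^i_j A^i_j$, and the same appeal to \cite{selvaraju2017grad} for the constancy of gradients on each activation map when the target layer precedes GAP. The points you add explicitly (the inessential-game framing, the Efficiency consistency check, and treating ReLU as post-processing outside the CRG template) are sound and merely spell out what the paper leaves implicit.
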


\begin{proof}
\noindent \textbf{Notation:} Define \( X_D \in \mathbb{R}^{1 \times Nd} \) as \( X_D := \left[ A^1 \mid A^2 \mid \cdots \mid A^N \right] \), where \( A^i \in \mathbb{R}^{1 \times d} \). Let \( X^i \in \mathbb{R}^{1 \times Nd} \) be the vector with \( A^i \) at its original location and zeros elsewhere: \( X^i := \left[ \vec{0} \mid \cdots \mid A^i \mid \cdots \mid \vec{0} \right] \), where \( \vec{0} \in \mathbb{R}^{1 \times d} \). Define \( X_j \in \mathbb{R}^{1 \times Nd} \) as \( X_j := \left[ 0, A^1_j, 0 \mid \cdots \mid 0, A^i_j, 0 \mid \cdots \mid 0, A^N_j, 0 \right] \) retaining only the \( j \)-th element of each \( A^i \), and let \( X_S := \sum_{j=1,j\in S}^d X_j \) for a subset \( S \subseteq D = \{ j \}_{j=1}^d \). The utility  function values are represented by \( U(X_S) \) and \( U(X_D) \), with \( U'(X_D) \) as the gradient and \( H_D \) as the Hessian matrix at \( X_D \). 
The Taylor expansion of \( U(X_S) \) includes \( U_1(X_S) \) for the first-order term and \( U_2(X_S) \) for the second-order term. 

\noindent \textbf{Derivations:} We begin by applying the first-order Taylor expansion of the functions \( U(X_S) \) at \( X_D \) for all \(S\) as in \cite{wang2024data}:
\begin{align*}
    U(X_S) &\approx U(X_D) + \underbrace{U'(X_D) (X_S - X_D)}_{U_1(X_S)}, \\
    U(X_{S \cup j}) &
    \approx U(X_D) + \underbrace{U'(X_D)(X_S - X_D + X_j)}_{U_1(X_{S \cup j})}.
\end{align*}

Then, consider the difference:
\begin{align*}
    &U(X_{S \cup j}) - U(X_S) = \underbrace{U'(X_{D}) X_j}_{U_1(X_{S \cup j}) - U_1(X_S) }.
\end{align*}

To compute \( \phi(j; U) \), we use the Equation \eqref{eq:shapley}, and substitute \( U(X_{S \cup j}) - U(X_S) \) with \(U'(X_D) X_j \):
\begin{align}\label{eq:U1}
    &\phi(j; U) \nonumber\\&= \frac{1}{d} \sum_{k=1}^{d} \binom{d-1}{k-1}^{-1} 
    \sum_{\substack{S \subseteq D\setminus \{j\} \\ |S|=k-1}} \left( U(X_{S \cup j}) - U(X_S)\right) \nonumber \\
    &= \frac{1}{d} \sum_{k=1}^{d} \binom{d-1}{k-1}^{-1} 
    \sum_{\substack{S \subseteq D\setminus \{j\} \\ |S|=k-1}} U'(X_D) X_j \nonumber \\
    &= \frac{1}{d} U'(X_D) X_j \sum_{k=1}^{d} \binom{d-1}{k-1}^{-1} 
    \sum_{\substack{S \subseteq D\setminus \{j\} \\ |S|=k-1}} 1 \nonumber \\
    &= \frac{1}{d} U'(X_D) X_j \sum_{k=1}^{d} \binom{d-1}{k-1}^{-1} \binom{d-1}{k-1} \nonumber \\
    &= \frac{1}{d} U'(X_D) X_j \sum_{k=1}^{d} 1 = U'(X_D) X_j.
\end{align}

Therefore, Shapley values under the first-order approximation of \( U \) are given by:
\begin{align}\label{eq:1st_shapley}
    \phi(j; U) = U'(X_D) X_j = \sum_{i=1}^N [U'(X_D)]^i_j A^i_j.
\end{align}

By integrating Equation \eqref{eq:1st_shapley} into the definition of CRG Explainer, we deduce that the coefficient \(W_j^i\) associated with \(A_j^i\) is expressed as \([U'(X_D)]^i_j\), representing the gradients of the utility function concerning \(A_j^i\). Thus, according to the definition of the CRG Explainer, the theorem is substantiated.
\end{proof}

It has been proven in \cite{selvaraju2017grad} that using the pre-softmax score as the utility function results in GradCAM and CAM heatmaps being identical, differing only by a constant factor eliminated during normalization. Thus, we have the following corollary:

\begin{corollary}\label{thm:cam}
CAM \cite{zhou2016learning} is the Optimal CRG Explainer when using the layer preceding the GAP as the target layer and the pre-softmax score as the utility function.
\end{corollary}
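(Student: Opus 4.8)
The plan is to derive the corollary directly from Theorem~\ref{thm:hirescam} together with the classical CAM--GradCAM equivalence of \cite{selvaraju2017grad}, rather than re-running the Shapley computation. Everything reduces to identifying CAM's weights with the spatially constant gradients that already appear in the Optimal case of Theorem~\ref{thm:hirescam}.

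First I would recall the structural fact about the architecture to which CAM applies: a GAP layer feeding a single fully connected classifier. Writing the pre-softmax logit as $y^c = \sum_{i=1}^{N} w^i \cdot \mathrm{GAP}(A^i) = \sum_{i=1}^N \frac{w^i}{d}\sum_{j=1}^d A^i_j$, direct differentiation gives $\frac{\partial y^c}{\partial A^i_j} = \frac{w^i}{d}$ for every spatial index $j$. Hence the gradient vector $W^i = \partial y^c / \partial A^i$ is constant across its $d$ entries, so $W^i = \overline{W^i}\mathbf{1}_d$ with $\overline{W^i} = w^i/d$. This is exactly the condition placing the explainer in the \textbf{Optimal} class of Definition~\ref{def:CRG}, and it is precisely why the target layer preceding GAP is singled out.

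Second I would invoke Theorem~\ref{thm:hirescam}: under the first-order Taylor approximation with the pre-softmax logit $y^c$ in the role of $U$, GradCAM is a Type-II (content reserved) CRG Explainer, and when the target layer is the one preceding GAP it is in fact Optimal. Substituting the weights above, the GradCAM heatmap becomes $\mathrm{ReLU}\!\left(\sum_i \overline{W^i} A^i\right) = \mathrm{ReLU}\!\left(\sum_i \frac{w^i}{d} A^i\right)$, which coincides with the CAM heatmap $\sum_i w^i A^i$ up to the scalar $1/d$ and the sign-truncating ReLU. Since the post-processing normalization eliminates this constant factor --- the statement proved in \cite{selvaraju2017grad} --- CAM and GradCAM yield the same explanation here.

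The remaining, and main, step is to transfer the \emph{classification} as an Optimal CRG Explainer from GradCAM to CAM, rather than merely the numerical heatmap. For this I would verify that CAM itself admits the CRG form $E = \sum_{i=1}^N g(W^i)\odot A^i$ with the very same weights $W^i = \overline{W^i}\mathbf{1}_d$: because these weights are spatially constant, CAM satisfies the game-theoretic condition $g(W^i)=W^i$ and the content reserved condition $g(W^i)=\overline{W^i}\mathbf{1}_d$ simultaneously, which is the definition of the Optimal CRG Explainer. I expect the only subtlety to live here --- not in any computation, but in making the identification between CAM's fixed fully connected weights $w^i$ and the gradient-based weights $W^i$ airtight, so that the Shapley-value interpretation underlying Theorem~\ref{thm:hirescam} genuinely applies to CAM and not merely to a heatmap that happens to agree with it.
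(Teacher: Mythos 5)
Your proposal is correct and follows essentially the same route as the paper, which likewise derives the corollary from Theorem~\ref{thm:hirescam} combined with the CAM--GradCAM equivalence of \cite{selvaraju2017grad} (including the constant-gradient fact $W^i = \overline{W^i}\mathbf{1}_d$ at the layer preceding GAP, which the paper delegates to that reference). Your explicit transfer step identifying CAM's fully connected weights with the first-order Shapley coefficients is a welcome tightening of an argument the paper states only in passing, but it is not a different method.
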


\begin{algorithm}[t]
\renewcommand{\algorithmicrequire}{\textbf{Input:}}
\renewcommand{\algorithmicensure}{\textbf{Output:}}
\begin{algorithmic}[1]
\Require{Input image \(x\), neural network \(f(\cdot)\), target class \(c\), target layer \(l\).}
\Ensure{Heatmap.}
\State Forward pass: Compute logits \(y\) for the input \(x\) using \(f(\cdot)\), and save the activation maps \( \{A^i\}_{i=1}^{N_l} \) from the \(l\) target layer. Concatenate these maps to obtain \( X_D := \left[ A^1 \mid A^2 \mid \cdots \mid A^{N_l} \right] \).
\State Compute the ReST: \( U_{\text{ReST}} = y^c + \ln(\text{softmax}(y)^c) \).
\State Backward pass: Compute the gradient of \(U_{\text{ReST}}\) with respect to the activation maps: \( U'(X_D) = \partial U_{\text{ReST}}/\partial X_D \).
\State Backward pass: Compute the Hessian matrix of \(U_{\text{ReST}}\) with respect to the activation maps: \( H_D = \partial^2 U_{\text{ReST}}/\partial X_D^2 \).
\State Compute the weights: \(W^i = \left[ U'(X_D) - \frac{1}{2} X_D^\top H_D \right]^i\).
\If{use ShapleyCAM}
    \State \textbf{return} \(\text{ReLU}\left( \sum_{i=1}^{N_l} \overline{W^i} A^i \right)\).
\ElsIf{use ShapleyCAM-H}
    \State \textbf{return} \( \text{ReLU} \left( \sum_{i=1}^{N_l} (W^i \odot A^i) \right)\).
\ElsIf{use ShapleyCAM-E}
    \State \textbf{return} \(\text{ReLU}\left( \sum_{i=1}^{N_l} \text{ReLU}(W^i \odot A^i) \right)\).
\EndIf
\end{algorithmic}
\caption{ShapleyCAM and its variants}
\label{alg:shapleycam}
\end{algorithm}

\subsection{ShapleyCAM Algorithms}

Within the CRG Explainer, we propose ShapleyCAM and its variants by approximating the utility function via a second-order Taylor expansion. Applying \( W^i_j = [U'(X_D) - \frac{1}{2} X_D^\top H_D]^i_j \) from Equation \eqref{eq:finalshapley} to Equations \eqref{eq:gradcam}, \eqref{eq:hirescam}, and \eqref{eq:gradcam-e}, which correspond to GradCAM, HiResCAM, and GradCAM-E, respectively, we derive ShapleyCAM, ShapleyCAM-H, and ShapleyCAM-E. Algorithm \ref{alg:shapleycam} presents the detailed algorithms.


\begin{theorem}\label{thm:2ndcrg}
When using second-order Taylor expansion to approximate the utility function, ShapleyCAM-H is a Type-I CRG Explainer, and ShapleyCAM is a Type-II CRG Explainer. Both ShapleyCAM and ShapleyCAM-H are Optimal CRG Explainers if the target layer is the layer preceding GAP.
\end{theorem}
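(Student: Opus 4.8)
The plan is to reuse the scaffolding of the proof of Theorem~\ref{thm:hirescam}, keeping the notation $X_D$, $X_j$, $X_S$, $U'(X_D)$, and $H_D$, but replacing the first-order expansion with the second-order one. First I would write, for every subset $S \subseteq D \setminus \{j\}$,
\begin{equation*}
U(X_S) \approx U(X_D) + U'(X_D)(X_S - X_D) + \tfrac{1}{2}(X_S - X_D) H_D (X_S - X_D)^\top,
\end{equation*}
together with the analogous expansion for $U(X_{S \cup j})$, obtained by replacing $X_S$ with $X_S + X_j$. Subtracting and using the symmetry of $H_D$ (so the two mixed products coincide), the marginal contribution splits into three pieces,
\begin{equation*}
U(X_{S\cup j}) - U(X_S) = \underbrace{U'(X_D) X_j}_{\text{constant in } S} + \underbrace{X_j H_D (X_S - X_D)^\top}_{\text{depends on } S} + \underbrace{\tfrac{1}{2} X_j H_D X_j^\top}_{\text{constant in } S}.
\end{equation*}

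The main obstacle—and the only genuinely new computation relative to Theorem~\ref{thm:hirescam}—is averaging the middle, $S$-dependent term under the Shapley weights. Writing $X_S = \sum_{l \in S} X_l$ and $c_{jl} := X_j H_D X_l^\top$, I would count, for each fixed $l \neq j$, the number $\binom{d-2}{k-2}$ of size-$(k-1)$ subsets of $D \setminus \{j\}$ that contain $l$, and then use the identity $\binom{d-2}{k-2}/\binom{d-1}{k-1} = (k-1)/(d-1)$ together with $\sum_{k=1}^{d}(k-1) = d(d-1)/2$. This collapses the double sum to $\tfrac{1}{2}\sum_{l \neq j} c_{jl} = \tfrac{1}{2}\big(X_j H_D X_D^\top - X_j H_D X_j^\top\big)$, where I have used $X_D = \sum_{l=1}^{d} X_l$. (Equivalently one may argue via the permutation form, since each other player precedes $j$ with probability $\tfrac{1}{2}$.) The delicate point is the diagonal self-interaction $X_j H_D X_j^\top$, which must be excluded from this average and then recombined with the third piece above.

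Combining the three contributions, the two copies of $\tfrac{1}{2} X_j H_D X_j^\top$ cancel, leaving
\begin{equation*}
\phi(j; U) = U'(X_D) X_j - \tfrac{1}{2} X_j H_D X_D^\top = \sum_{i=1}^{N} \Big[U'(X_D) - \tfrac{1}{2} X_D^\top H_D\Big]^i_j A^i_j,
\end{equation*}
which is exactly $\sum_{i=1}^{N} W^i_j A^i_j$ with the weight $W^i_j$ of Equation~\eqref{eq:finalshapley}. From here the classification follows immediately from Definition~\ref{def:CRG}, precisely as in Theorem~\ref{thm:hirescam}: ShapleyCAM-H takes $g(W^i) = W^i$ and is therefore a Type-I (game-theoretic) explainer, while ShapleyCAM takes $g(W^i) = \overline{W^i}\mathbf{1}_d$ and is Type-II (content reserved).

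Finally, for the Optimal claim I would invoke the GAP structure. When the target layer precedes GAP, $U$ depends on $X_D$ only through the $N$ pooled values $\overline{A^i} = \tfrac{1}{d}\sum_j A^i_j$; by the chain rule, both $\partial U/\partial A^i_j = \tfrac{1}{d}\,\partial U/\partial \overline{A^i}$ and $\partial^2 U/\partial A^i_j \partial A^{i'}_{j'} = \tfrac{1}{d^2}\,\partial^2 U/\partial \overline{A^i}\,\partial \overline{A^{i'}}$ are independent of the spatial indices $j, j'$. Hence each entry of the gradient term $[U'(X_D)]^i$ is constant across $j$, and—since every $d \times d$ block of $H_D$ is then a constant matrix—contracting $H_D$ against $X_D$ shows that each entry of $[X_D^\top H_D]^i$ is likewise constant across $j$. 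Therefore $W^i = \overline{W^i}\mathbf{1}_d$ for every $i$, so both ShapleyCAM and ShapleyCAM-H satisfy the content reserved and game-theoretic properties simultaneously and are Optimal CRG Explainers, completing the proof.
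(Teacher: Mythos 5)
Your proposal is correct and takes essentially the same approach as the paper's proof: the same second-order Taylor expansion, the same combinatorial averaging via $\binom{d-2}{k-2}\binom{d-1}{k-1}^{-1} = (k-1)/(d-1)$ and $\sum_{k=1}^{d}(k-1) = d(d-1)/2$, and the identical closed form $\phi(j;U) = U'(X_D)X_j - \tfrac{1}{2}X_D^\top H_D X_j$, with the only cosmetic difference being that the paper routes the bookkeeping through the Linearity axiom (splitting $\phi(j;U)=\phi(j;U_1)+\phi(j;U_2)$) while you average the full marginal contribution at once. Your explicit chain-rule argument for the Optimal case (constant gradient entries and constant Hessian blocks across spatial indices) supplies a detail the paper delegates to the definition and a citation, and it is consistent with the paper's claim.
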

\begin{proof}
Here, we apply the second-order Taylor expansion of the functions \( U(X_S) \) at \( X_D \) for all \(S\) as in \cite{wang2024data}:
\begin{align*}
    U(X_S) &\approx U(X_D) + \underbrace{U'(X_D) (X_S - X_D)}_{U_1(X_S)} \\&\quad+ \underbrace{\frac{1}{2} (X_S - X_D)^\top H_D (X_S - X_D)}_{U_2(X_S)}, \\
    U(X_{S \cup j}) &
    \approx U(X_D) + \underbrace{U'(X_D)(X_S - X_D + X_j)}_{U_1(X_{S \cup j})} \\ \quad+&\underbrace{ \frac{1}{2} (X_S - X_D + X_j)^\top H_D (X_S - X_D + X_j)}_{U_2(X_{S \cup j})}.
\end{align*}

Then, consider the difference:
\begin{align*}
    &U(X_{S \cup j}) - U(X_S) 
    \\&= \underbrace{U'(X_D) X_j}_{U_1(X_{S \cup j}) - U_1(X_S) } \\&+\underbrace{ \frac{1}{2}X_j^\top H_D X_j - X_D^\top H_D X_j + X_S^\top H_D X_j}_{U_2(X_{S \cup j}) - U_2(X_S) }.
\end{align*}

Using the Linearity property of the Shapley value, we can decompose the Shapley value of \(j\) under \(U\) into the sum of its Shapley values under \(U_1\) and \(U_2\):
\begin{align}\label{eq:linear}
     \phi(j;U_1 + U_2) = \phi(j;U_1) + \phi(j;U_2).
\end{align}


Similar to Equation \eqref{eq:U1}, we substitute \( U_2(X_{S \cup j}) - U_2(X_S) \) with \(\frac{1}{2}X_j^\top H_D X_j - X_D^\top H_D X_j + X_S^\top H_D X_j \):

\begin{align*}
     &\phi(j; U_2) = \frac{1}{d} \sum_{k=1}^{d} \binom{d-1}{k-1}^{-1} 
    \sum_{\substack{S \subseteq D\setminus \{j\} \\ |S|=k-1}}  X_S^\top H_D X_j \\
     + & \frac{1}{d} \sum_{k=1}^{d} \binom{d-1}{k-1}^{-1} 
    \sum_{\substack{S \subseteq D\setminus \{j\} \\ |S|=k-1}} (  \frac{1}{2}X_j^\top H_D X_j-X_D^\top H_D X_j)\\
    &= \frac{1}{d}  \sum_{k=2}^{d} \binom{d-1}{k-1}^{-1} \sum_{i\in D\setminus{j}}\sum_{\substack{S \subseteq D\setminus \{i,j\} \\ |S|=k-2}} X_i^\top H_D X_j\\
    + &\frac{1}{d} \sum_{k=1}^{d} \binom{d-1}{k-1}^{-1} 
    \sum_{\substack{S \subseteq D\setminus \{j\} \\ |S|=k-1}} (  \frac{1}{2}X_j^\top H_D X_j-X_D^\top H_D X_j),
\end{align*}

The last step relies on \( X_S = \sum_{i \in S} X_i \), which allows us to express the sum over all possible subsets \( S \) in terms of the individual elements \( X_i \) within those subsets. Then:
\begin{align}\label{eq:U2}
    &\phi(j; U_2) = \frac{1}{d}  \sum_{k=2}^{d} \binom{d-1}{k-1}^{-1} \sum_{i\in D\setminus{j}}\binom{d-2}{k-2} X_i^\top H_D X_j \nonumber\\
    &+ \frac{1}{d} ( \frac{1}{2} X_j^\top H_D X_j -X_D^\top H_D X_j )\sum_{k=1}^{d} \binom{d-1}{k-1}^{-1} \binom{d-1}{k-1} \nonumber \\
    &= \frac{1}{d}   \sum_{k=2}^{d} \binom{d-1}{k-1}^{-1} \binom{d-2}{k-2}\left(\sum_{i\in D\setminus{j}}  X_i^\top H_D X_j\right) \nonumber\\
    &\quad + \frac{1}{d} ( \frac{1}{2} X_j^\top H_D X_j -X_D^\top H_D X_j)\sum_{k=1}^{d} 1 \nonumber\\
    &= \frac{1}{d} \sum_{k=2}^{d} \frac{k-1}{d-1}(X_{D\setminus{j}}^\top H_D X_j) + (\frac{1}{2} X_j^\top H_D X_j -X_D^\top H_D X_j )\nonumber\\
    &= \frac{\sum_{k=2}^{d} (k-1)}{d(d-1)}X_{D\setminus{j}}^\top H_D X_j + \frac{1}{2} X_j^\top H_D X_j -X_D^\top H_D X_j \nonumber\\
    &= \frac{1}{2}X_{D\setminus{j}}^\top H_D X_j + \frac{1}{2} X_j^\top H_D X_j -X_D^\top H_D X_j \nonumber\\
    &= \frac{1}{2} X_D^\top H_D X_j -X_D^\top H_D X_j \nonumber\\
    &= -\frac{1}{2} X_D^\top H_D X_j.
\end{align}
Therefore, Shapley values under the utility function \( U_2 \) are:
\begin{align}
    \phi(j; U_2) =  -\frac{1}{2} X_D^\top H_D X_j = \sum_{i=1}^N [-\frac{1}{2}X_D^\top H_D]^i_j A^i_j.
\end{align}

Equation \eqref{eq:U1} also represents the computation of \(\phi(j; U_1)\). By substituting Equations \eqref{eq:U1} and \eqref{eq:U2} into Equation \eqref{eq:linear}, we derive the final expression for \(\phi(j;U)\) under the second-order approximation of \(U\):
\begin{align}\label{eq:finalshapley}
    \phi(j;U) &=  U'(X_D) X_j - \frac{1}{2} X_D^\top H_D X_j\notag\\
    &=\sum_{i=1}^N [U'(X_D)-\frac{1}{2}X_D^\top H_D]^i_j A^i_j.
\end{align}

By integrating Equation \eqref{eq:finalshapley} into the CRG Explainer, we deduce that the coefficient \(W_j^i\) associated with \(A_j^i\) is expressed as \( W^i_j = [U'(X_D) - \frac{1}{2} X_D^\top H_D]^i_j \). Thus, according to the definition of the CRG Explainer, the theorem is substantiated.
\end{proof}

Previous work like LIFTCAM \cite{liftcam2021jung} has aimed to combine the Shapley value with CAM, using Shapley values from DeepLIFT to weight activation maps. In contrast, ShapleyCAM leverages the Shapley value to reframe the CAMs, treating the pixels of the downsampled \(d\)-dimensional raw image as players, aligning more closely with SHAP. However, SHAP aims to explain the entire highly complex model, while ShapleyCAM narrows its focus to the layers between the target layer and the output's utility function, making the explanation process more manageable. Additionally, ShapleyCAM employs a derived closed-form Shapley value, thus circumventing the repeated inference that SHAP necessitates.

For the complexity of the ShapleyCAM algorithm, although Hessian matrix computation is often considered computationally intensive, ShapleyCAM only requires the Hessian-vector product of \( H_D \) and \( X_D \), which is efficiently supported in modern deep learning frameworks like PyTorch and JAX \cite{da2024howtohvp}. This allows ShapleyCAM to run with just one extra backward pass  \cite{da2024howtohvp} compared to GradCAM, making it scalable for large datasets as well.

\subsection{Utility Function: Pre or Post-Softmax Scores? Both!}
\label{subsec:softmax}

The choice between pre-softmax and post-softmax scores for generating explanations remains debated \cite{lerma2023pre}. While most methods \cite{zhou2016learning,selvaraju2017grad,chattopadhay2018grad} use pre-softmax scores, this has not been fully analyzed from a decision-making perspective. Intuitively, combining the input with the CAM heatmap should increase the confidence or probability for the target class. However, using pre-softmax scores can lead to a situation where the region of non-target classes is also highlighted, which, paradoxically, might reduce the new probability of the target class. For example, as shown in the top-left in Fig. \ref{fig:softmax}, when the \textit{tiger cat} is the target class, GradCAM with pre-softmax incorrectly highlights the \textit{boxer}. 

Using post-softmax scores can avoid the phenomenon. As shown in the top-middle of Fig. \ref{fig:softmax}, GradCAM with post-softmax does not highlight the \textit{boxer}. Besides, when a  \(\ln\)  function is applied after softmax \cite{lerma2023pre}, the resulting utility function matches the cross-entropy loss used during model training.

Next, we reveal the relationship between the generated heatmaps when utilizing the pre and post-softmax score as utility functions, respectively, from a theoretical perspective. Assume GradCAM uses the activation maps \( \{A^i\}_{i=1}^{N_l}\) from target layer \(l\) (regardless of which layer is chosen) and gradients \( \frac{\partial y^c}{\partial A^i} \) or \( \frac{\partial p^c}{\partial A^i} \) to generate the heatmap \(E_c^{\text{pre}} \) or \(E_c^{\text{post}} \) before the ReLU operation. Here, \( y^c \) represents the logit for the target class  \(c\in [1, \dots, C]\), and \( p^c =\text{softmax}(y)^c \) denotes the corresponding probability, then the following theorem holds:

\begin{theorem}\label{thm:softmax_gradcam_equiv}
The heatmap generated by GradCAM using the post-softmax score is equivalent to the ensemble of  \(C\) heatmaps generated by GradCAM using the pre-softmax score.
\end{theorem}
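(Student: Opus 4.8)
The plan is to compute the post-softmax gradient $\partial p^c / \partial A^i$ explicitly by differentiating the softmax through the chain rule, and then to substitute it into the GradCAM construction so that the spatial averaging and the heatmap assembly expose $E_c^{\text{post}}$ as a linear combination of the $C$ pre-softmax heatmaps $\{E_k^{\text{pre}}\}_{k=1}^C$. The whole argument is driven by the structure of the softmax Jacobian, which couples the target class to every other class.

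First I would record the standard softmax derivative $\partial p^c / \partial y^k = p^c(\delta_{ck} - p^k)$, where $\delta_{ck}$ is the Kronecker delta. Applying the chain rule through all $C$ logits gives
\begin{equation*}
\frac{\partial p^c}{\partial A^i} = \sum_{k=1}^{C} \frac{\partial p^c}{\partial y^k}\,\frac{\partial y^k}{\partial A^i} = p^c\left(\frac{\partial y^c}{\partial A^i} - \sum_{k=1}^{C} p^k\,\frac{\partial y^k}{\partial A^i}\right).
\end{equation*}
Next I would apply the GradCAM averaging operator to both sides. Since $p^c$ and each $p^k$ are scalars that do not depend on the spatial index, they factor out of the spatial mean $\overline{(\cdot)}$, so that $\overline{\partial p^c/\partial A^i} = p^c\bigl(\overline{\partial y^c/\partial A^i} - \sum_{k} p^k\,\overline{\partial y^k/\partial A^i}\bigr)$.

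Substituting this weight into $E_c^{\text{post}} = \sum_i \overline{\partial p^c/\partial A^i}\,A^i$ and grouping the activation maps by the pre-softmax heatmap definition $E_k^{\text{pre}} = \sum_i \overline{\partial y^k/\partial A^i}\,A^i$ yields
\begin{equation*}
E_c^{\text{post}} = p^c\left(E_c^{\text{pre}} - \sum_{k=1}^{C} p^k E_k^{\text{pre}}\right) = \sum_{k=1}^{C} p^c(\delta_{ck} - p^k)\,E_k^{\text{pre}},
\end{equation*}
which is exactly the claimed ensemble of the $C$ pre-softmax heatmaps, with class-$k$ weight $p^c(\delta_{ck}-p^k)$.

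I expect the main obstacle to be careful bookkeeping rather than a genuine difficulty: the crucial step is justifying that the scalar probabilities commute with the spatial mean, so that the off-diagonal softmax terms survive and reintroduce every $E_k^{\text{pre}}$ into the expression rather than only the target one. Some care is also needed to keep the class summation index $k$ distinct from the channel index $i$ and the suppressed spatial index inside $\overline{(\cdot)}$, and to note that the identity is established before the ReLU operation, consistent with the statement of the theorem.
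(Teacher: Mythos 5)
Your proof is correct and follows essentially the same route as the paper: compute the softmax Jacobian, push it through the chain rule, use linearity of the spatial-averaging operator to factor out the scalar probabilities, and regroup into a weighted combination of the $C$ pre-softmax heatmaps. Your final form $E_c^{\text{post}} = \sum_{k=1}^{C} p^c(\delta_{ck} - p^k)E_k^{\text{pre}}$ is algebraically identical to the paper's pairwise-difference expression $E_c^{\text{post}} = p^c \sum_{k \neq c} p^k \left(E_c^{\text{pre}} - E_k^{\text{pre}}\right)$, as the two are related by $\sum_k p^k = 1$.
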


\begin{proof}

Suppose that GradCAM produces the heatmap before the ReLU operation as \( E_c^{\text{post}} = \frac{1}{N_l} \sum_{i=1}^{N_l} g\left(\frac{\partial p^c}{\partial A^i}\right) \odot A^i \) when using the post-softmax score, and as \( E_c^{\text{pre}} = \frac{1}{N_l} \sum_{i=1}^{N_l} g\left(\frac{\partial y^c}{\partial A^i}\right) \odot A^i \) when using the pre-softmax score. Here, \( g(A) = \frac{\langle A, \mathbf{1}_d \rangle}{d} \mathbf{1}_d \) replaces each element of the \( d \)-dimensional vector \( A \) with its mean.

To compute the gradient \( \frac{\partial p^c}{\partial A^i} \), we use the chain rule: \(\frac{\partial p^c}{\partial A^i} = \sum_{k=1}^{C} \frac{\partial p^c}{\partial y^k} \frac{\partial y^k}{\partial A^i}\).
With the facts that \( \frac{\partial p^c}{\partial y^c} = (1- p^c)p^c \) and \( \frac{\partial p^c}{\partial y^k} = - p^k p^c \) when \(k\ne c\), we have
\(\frac{\partial p^c}{\partial A^i} = (1 - p^c)p^c \frac{\partial y^c}{\partial A^i} - \sum_{k=1, k \neq c}^{C} p^c p^k \frac{\partial y^k}{\partial A^i}\),
or equivalently, \(\frac{\partial p^c}{\partial A^i} = p^c \sum_{k=1, k \neq c}^{C} p^k \left( \frac{\partial y^k}{\partial A^i} - \frac{\partial y^c}{\partial A^i} \right)\) because \(\sum_{k=1}^C p^k = 1\).

It is easy to verify that \(g(A)\) is a linear transformation of \( A \), because  \(g(A)\) enjoys the properties \( g(A_1 + A_2) = \frac{\langle A_1+A_2, \mathbf{1}_d\rangle}{d} \mathbf{1}_d  =  g(A_1) + g(A_2) \) and \( g(\lambda A) = \frac{\langle\lambda A , \mathbf{1}_d\rangle}{d} \mathbf{1}_d  = \lambda g(A) \). Thus, \(
g( \frac{\partial p^c}{\partial A^i} ) = p^c \sum_{k=1, k \neq c}^{C} p^k ( g ( \frac{\partial y^k}{\partial A^i} ) - g ( \frac{\partial y^c}{\partial A^i} ))\). 

Then we have \(
E_c^{\text{post}} = \frac{1}{N_l} \sum_{i=1}^{N_l} g( \frac{\partial p^c}{\partial A^i}) \odot A^i = \frac{1}{N_l} \sum_{i=1}^{N_l} (p^c \sum_{k=1, k \neq c}^{C} p^k ( g ( \frac{\partial y^k}{\partial A^i} ) - g ( \frac{\partial y^c}{\partial A^i} ))) \odot A^i = p^c \sum_{k=1, k\ne c}^{C} p^k ( \frac{1}{N_l} \sum_{i=1}^{N_l} ( g ( \frac{\partial y^k}{\partial A^i}) - g ( \frac{\partial y^c}{\partial A^i} ) ) \odot A^i )= p^c \sum_{k=1, k\ne c}^{C} p^k ( \frac{1}{N_l} \sum_{i=1}^{N_l} g ( \frac{\partial y^k}{\partial A^i}) \odot A^i - \frac{1}{N_l} \sum_{i=1}^{N_l}g ( \frac{\partial y^c}{\partial A^i} ) \odot A^i )
\)
and eventually reduces to:
\begin{equation}\label{eq:softmax_gradcam_equiv}
    E_c^{\text{post}} = p^c \sum_{k=1, k\ne c}^{C} p^k (E_c^{\text{pre}} - E_k^{\text{pre}}).
\end{equation}
\end{proof}

In the proof of Theorem \ref{thm:softmax_gradcam_equiv}, we use the heatmap before applying the ReLU operation as the explanation. ReLU emphasizes positive regions \cite{jacobgilpytorchcam}, but some areas may actually contribute negatively to the prediction, suggesting that the original explanation should be the heatmap before ReLU.
Actually, our proof can be generalized to other CAM methods if \( g(X)\) is a linear transformation. For instance, Theorem \ref{thm:softmax_gradcam_equiv} also holds for HiResCAM where \( g(X) = X \).

Not only does Theorem \ref{thm:softmax_gradcam_equiv} establish a connection between heatmaps generated by GradCAM using pre-softmax score and post-softmax score, but it also provides insight into the effectiveness of the post-softmax method. Specifically, it shows that the post-softmax method uses the difference between the target class heatmap and other class heatmaps, generated by GradCAM with pre-softmax scores, to produce the final explanation: Although \(E_c^{\text{pre}}\) may highlight regions belonging to another class \(b\), the subtraction of \(E_b^{\text{pre}}\) removes their influence.

Furthermore, Theorem \ref{thm:softmax_gradcam_equiv} also reveals a limitation of using post-softmax scores: When a neural network is extremely confident in its prediction (i.e., \(p^c\) is really close to 1), the gradients of the probabilities for other classes become very small (i.e., \(p^k\) is also really close to \(0\) for \(k\ne c\)), causing the generated heatmap \( E_c^{\text{post}}\) to diminish towards zero, as shown in Equation \eqref{eq:softmax_gradcam_equiv}. This phenomenon is commonly referred to as gradient vanishing \cite{lerma2023pre}. For instance, the bottom row of Fig. \ref{fig:softmax} shows an image where ResNet-18 predicts \textit{axolotl} with a probability greater than $1 - 10^{-15}$. In this case, post-softmax suffers from gradient vanishing and fails to highlight the \textit{axolotl}, while pre-softmax does not have this issue and correctly highlights the \textit{axolotl}.

To address this issue, inspired by the idea of residual \cite{he2016deep}, we propose Residual Softmax Target-Class (ReST) utility function using both pre and post-softmax scores:
\begin{equation}
    U_{\text{ReST}} = y^c + \ln(\text{softmax}(y)^c).
\end{equation}
Then the heatmap produced by GradCAM with ReST is:
\begin{equation}\label{eq:ReST}
    E_c^{\text{ReST}} = E_c^{\text{pre}} + \sum_{k=1, k \neq c}^{C} p^k \left(E_c^{\text{pre}} - E_k^{\text{pre}}\right).
\end{equation}
ReST mitigates the negative impact of gradient vanishing by incorporating an extra \(E_c^{\text{pre}}\) term, while also maintaining a stable focus on the target class similar to post-softmax. As shown in the right column of Fig. \ref{fig:softmax}, GradCAM with ReST or post-softmax consistently focuses on the \textit{tiger cat}, while pre-softmax incorrectly highlights the \textit{boxer}. Additionally, GradCAM with ReST or pre-softmax can still correctly focus on the \textit{axolotl} when post-softmax fails due to gradient vanishing. Therefore, ReST can be viewed as a method that combines the advantages of pre-softmax and post-softmax scores while avoiding their respective drawbacks. In ShapleyCAM and its variants, we adopt the ReST utility function. It is used in all experiments, except in the ablation study evaluating ReST.


\subsection{Shapley Value and ReLU}

ReLU was introduced by GradCAM as a heuristic operation to eliminate negative regions in explanations and has been adopted by subsequent works. Methods like LayerCAM, GradCAM++, and GradCAM-E also apply ReLU to gradients or other components, but these methods may produce problematic explanations, as shown in Fig. \ref{fig:main}.

Within the CRG Explainer, the question of ReLU placement can be answered from a Shapley value perspective. Each pixel in the heatmap represents its Shapley value, with Shapley values greater than 0 indicating a positive contribution \cite{jia2023banzhaf} to the model's output utility and thus should be highlighted. Therefore, the most logical placement for ReLU is outside the summation, consistent with GradCAM and HiResCAM.

\section{Experiments}

This section evaluates various CAM methods across twelve distinct neural network architectures, utilizing two types of target layers and six metrics to assess the quality of the explanations. All experiments were performed on a server equipped with an Intel(R) Xeon(R) Gold 6326 CPU @ 2.90GHz and an NVIDIA A40 GPU. And we employ ReST as the utility function in all experiments, except for the ablation study of ReST.
\subsection{Setup} 
\subsubsection{Dataset}
Unlike previous studies that evaluated CAMs on randomly selected images, our experiments were conducted on the full ImageNet validation set (ILSVRC2012)~\cite{russakovsky2015imagenet}, which consists of 50,000 images spanning 1,000 distinct object categories. Each image was resized and center-cropped to 224 × 224 pixels, and subsequently normalized using the mean and standard deviation computed from the ImageNet training set.

\subsubsection{Networks}
To evaluate the CAM methods comprehensively, we use the following popular networks: ResNet-18, ResNet-50, ResNet-101, ResNet-152~\cite{he2016deep}, ResNeXt-50~\cite{xie2017aggregated}, VGG-16~\cite{simonyan2014very}, EfficientNet-B0~\cite{tan2019efficientnet}, MobileNet-V2~\cite{sandler2018mobilenetv2}, and Swin Transformer models in Tiny, Small, Base, and Large configurations~\cite{liu2021Swin}. All network weights (IMAGENET1K\_V1) were obtained directly from PyTorch and Timm \cite{rw2019timm}. The top-1 accuracies of these networks on ILSVRC2012 are appended to their names in Tables \ref{tab:results1_other}, \ref{tab:results1_swin}, \ref{tab:results2_others} and \ref{tab:results2_swin}.

\subsubsection{Compared CAM methods}
We compare the state-of-the-art gradient-based CAM methods in our evaluation: GradCAM \cite{selvaraju2017grad}, HiResCAM \cite{hires2020rachel}, GradCAM-E \cite{jacobgilpytorchcam}, LayerCAM \cite{jiang2021layercam}, XGradCAM \cite{fu2020axiom}, GradCAM++ \cite{chattopadhay2018grad}, RandomCAM \cite{jacobgilpytorchcam}, and the proposed ShapleyCAM, ShapleyCAM-H, ShapleyCAM-E. RandomCAM serves as a baseline, producing a random uniform scalar for each activation map in the range of \([-1, 1]\) to serve as the weight for generating the heatmap. We exclude gradient-free methods due to their long run times \cite{minh2023overview, torcham2020}. For example, ScoreCAM takes over 32 hours with ResNet-50 on ILSVRC2012, while ShapleyCAM and GradCAM finish in 40 and 30 minutes, respectively.

\begin{figure}[htbp]
    \centering
    \includegraphics[width=0.4\textwidth]{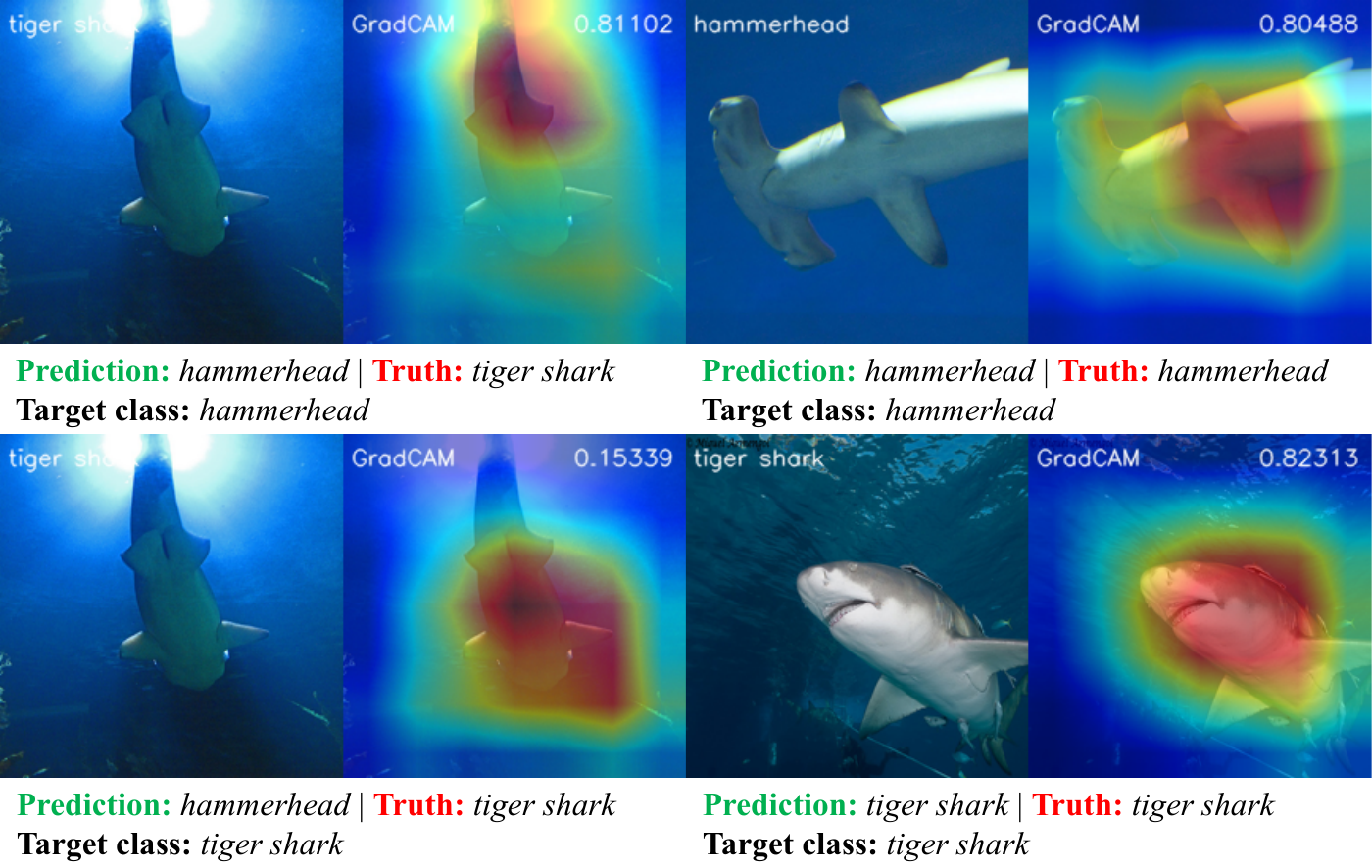} 
    \caption{GradCAM on ResNet-18 using ReST with the layer preceding GAP as the
target layer. Left: ResNet-18 incorrectly predicts a \textit{tiger shark} as a \textit{hammerhead}. Right: ResNet-18 accurately predicts the \textit{tiger shark} and the \textit{hammerhead}}
    \label{fig:predict_label}
\end{figure}

\subsubsection{Target class}

Previous studies often use the true label as the target class \cite{hires2020rachel,fu2020axiom} or focus on correctly predicted images \cite{chattopadhay2018grad,jiang2021layercam}. These settings are reasonable when a well-trained network consistently makes accurate predictions, and the explainer’s performance is measured by identifying corresponding evidence in the input image. However, we also seek insights when the model's predictions are incorrect. The setting of using true label actually introduces the confounding factor of model accuracy when evaluating the precision of explainability. The explainability methods should focus on uncovering the model’s decision-making process, helping users lift the veil on its inference for both correct and incorrect predictions. 

As illustrated in the top-left of Fig. \ref{fig:predict_label}, when ResNet-18 incorrectly predicts a \textit{tiger shark} as a \textit{hammerhead}, the corresponding heatmap highlights the tail of the \textit{tiger shark}, rather than the entire body as shown in the right column of Fig. \ref{fig:predict_label}. In such cases, users are understandably skeptical of the correctness of the prediction and may infer that the model's error is due to the similarity between the \textit{tiger shark}'s tail and a \textit{hammerhead}. Moreover, we often only have access to the model’s predictions in practice. Thus, we use the \textbf{predicted label} as the target class in our experiments, unless specified otherwise.

\begin{table*}[htbp]
\setlength{\tabcolsep}{.25em}
\renewcommand{\arraystretch}{1.05}
\centering
\caption{Evaluation of different CAM methods on eight different CNN backbones with the layer preceding the GAP layer as the target layer}
\label{tab:results1_other}
\resizebox{0.7\textwidth}{!}{
\begin{tabular}{l cccccc cc cccccc}
\hline
& \multicolumn{6}{c}{\textbf{ResNet-18 (69.76\%)}} & & \multicolumn{6}{c}{\textbf{ResNet-50 (76.13\%)}} \\
\cmidrule{2-7} \cmidrule{9-14}
\textbf{Method} & AD $\downarrow$ & Coh $\uparrow$ & Com $\downarrow$ & \textbf{ADCC} $\uparrow$ & \textbf{IC} $\uparrow$ & \textbf{ADD} $\uparrow$ & & AD $\downarrow$ & Coh $\uparrow$ & Com $\downarrow$ & \textbf{ADCC} $\uparrow$ & \quad\textbf{IC} $\uparrow$ & \textbf{ADD} $\uparrow$ \\
\hline
RandomCAM & 71.15 & 52.80 & 20.24 & 45.36 & 6.64 & 19.55 & & 67.35 & 54.29 & 19.75 & 48.77 & 8.24 & 14.36 \\
GradCAM++ & 19.86 & 98.55 & 41.84 & 75.34 & 31.30 & 51.52 & & 14.27 & 98.03 & 41.16 & 77.20 & 38.19 & 43.60 \\
\cdashline{1-14}
GradCAM-E & 19.84 & \underline{98.58} & 41.96 & 75.29 & 31.26 & 51.60 & & 14.26 & \underline{98.06} & 41.74 & 76.87 & 37.95 & 43.59 \\
ShapleyCAM-E & 19.81 & \textbf{98.58} & 41.93 & 75.31 & 31.29 & 51.63 & & 14.24 & \textbf{98.07} & 41.70 & 76.90 & 38.04 & 43.63 \\
\cdashline{1-14}
GradCAM & \underline{18.73} & 97.51 & \underline{38.84} & \underline{77.10} & \underline{33.77} & \underline{51.89} & & \textbf{13.84} & 97.07 & \underline{38.42} & \underline{78.64} & \underline{40.05} & \underline{43.82} \\
ShapleyCAM & \textbf{18.71} & 97.65 & \textbf{38.68} & \textbf{77.22} & \textbf{33.92} & \textbf{51.90} & & \underline{13.85} & 97.19 & \textbf{38.23} & \textbf{78.77} & \textbf{40.17} & \textbf{43.84} \\
\hline
& \multicolumn{6}{c}{\textbf{ResNet-101 (77.38\%)}} & & \multicolumn{6}{c}{\textbf{ResNet-152 (78.32\%)}} \\
\cmidrule{2-7} \cmidrule{9-14}
\textbf{Method} & AD $\downarrow$ & Coh $\uparrow$ & Com $\downarrow$ & \textbf{ADCC} $\uparrow$ & \textbf{IC} $\uparrow$ & \textbf{ADD} $\uparrow$ & & AD $\downarrow$ & Coh $\uparrow$ & Com $\downarrow$ & \textbf{ADCC} $\uparrow$ & \quad\textbf{IC} $\uparrow$ & \textbf{ADD} $\uparrow$ \\
\hline
RandomCAM & 65.07 & 55.09 & 19.79 & 50.63 & 9.56 & 14.43 & & 64.14 & 55.18 & 19.64 & 51.33 & 9.84 & 13.44 \\
GradCAM++ & 13.17 & 98.46 & 41.04 & 77.65 & 39.63 & 42.16 & & 12.62 & 98.49 & 40.71 & 78.00 & 39.06 & 39.84 \\
\cdashline{1-14}
GradCAM-E & 13.18 & \underline{98.51} & 41.64 & 77.31 & 39.37 & 42.21 & & 12.59 & \underline{98.53} & 41.32 & 77.66 & 38.78 & 39.91 \\
ShapleyCAM-E & 13.15 & \textbf{98.51} & 41.60 & 77.34 & 39.45 & 42.25 & & 12.57 & \textbf{98.53} & 41.28 & 77.69 & 38.88 & 39.96 \\
\cdashline{1-14}
GradCAM & \underline{12.83} & 97.40 & \underline{38.43} & \underline{78.98} & \underline{41.65} & \underline{42.50} & & \underline{12.44} & 97.52 & \underline{38.22} & \underline{79.24} & \underline{40.90} & \underline{40.19} \\
ShapleyCAM & \textbf{12.82} & 97.50 & \textbf{38.24} & \textbf{79.11} & \textbf{41.77} & \textbf{42.54} & & \textbf{12.43} & 97.62 & \textbf{38.03} & \textbf{79.36} & \textbf{41.14} & \textbf{40.23} \\
\hline
& \multicolumn{6}{c}{\textbf{ResNeXt-50 (77.62\%)}} & & \multicolumn{6}{c}{\textbf{MobileNet-V2 (71.88\%)}} \\
\cmidrule{2-7} \cmidrule{9-14}
\textbf{Method} & AD $\downarrow$ & Coh $\uparrow$ & Com $\downarrow$ & \textbf{ADCC} $\uparrow$ & \textbf{IC} $\uparrow$ & \textbf{ADD} $\uparrow$ & & AD $\downarrow$ & Coh $\uparrow$ & Com $\downarrow$ & \textbf{ADCC} $\uparrow$ & \quad\textbf{IC} $\uparrow$ & \textbf{ADD} $\uparrow$ \\
\hline
RandomCAM & 63.93 & 51.73 & 20.93 & 50.25 & 9.76 & 13.85 & & 71.37 & 56.43 & 19.40 & 46.12 & 6.00 & 17.92 \\
GradCAM++ & 12.54 & 97.83 & 45.81 & 74.79 & 38.96 & 40.00 & & 18.92 & 98.58 & 44.00 & 74.38 & 30.45 & 51.53 \\
\cdashline{1-14}
GradCAM-E & 12.59 & \underline{97.87} & 46.33 & 74.46 & 38.71 & 39.92 & & 18.96 & \textbf{98.60} & 44.30 & 74.19 & 30.28 & \textbf{51.55} \\
ShapleyCAM-E & 12.57 & \textbf{97.87} & 46.30 & 74.49 & 38.77 & 39.95 & & 18.97 & \underline{98.60} & 44.24 & 74.23 & 30.34 & \underline{51.54} \\
\cdashline{1-14}
GradCAM & \underline{12.19} & 96.89 & \underline{43.06} & \underline{76.39} & \underline{40.38} & \underline{40.47} & & \textbf{18.27} & 97.54 & \underline{40.23} & \underline{76.49} & \underline{32.54} & 51.06 \\
ShapleyCAM & \textbf{12.18} & 97.01 & \textbf{42.88} & \textbf{76.52} & \textbf{40.51} & \textbf{40.51} & & \underline{18.31} & 97.65 & \textbf{39.99} & \textbf{76.64} & \textbf{32.61} & 51.01 \\
\hline
& \multicolumn{6}{c}{\textbf{VGG-16 (71.59\%)}} & & \multicolumn{6}{c}{\textbf{EfficientNet-B0 (77.69\%)}} \\
\cmidrule{2-7} \cmidrule{9-14}
\textbf{Method} & AD $\downarrow$ & Coh $\uparrow$ & Com $\downarrow$ & \textbf{ADCC} $\uparrow$ & \textbf{IC} $\uparrow$ & \textbf{ADD} $\uparrow$ & & AD $\downarrow$ & Coh $\uparrow$ & Com $\downarrow$ & \textbf{ADCC} $\uparrow$ & \quad\textbf{IC} $\uparrow$ & \textbf{ADD} $\uparrow$ \\
\hline
RandomCAM & 72.39 & 60.94 & 16.84 & 46.40 & 4.64 & 15.07 & & 69.13 & 61.76 & 16.70 & 49.52 & 8.42 & 16.00 \\
GradCAM++ & 22.44 & \textbf{96.23} & 31.90 & 79.01 & 26.65 & 38.00 & & 30.42 & 97.35 & \textbf{23.21} & 79.65 & 26.98 & 34.66 \\
XGradCAM & \textbf{20.42} & 90.82 & 30.98 & 78.81 & \textbf{32.51} & \textbf{40.24} & & - & - & - & - & - & - \\
LayerCAM & - & - & - & - & - & - & & 30.58 & 97.37 & \underline{23.23} & 79.57 & 26.79 & 34.56 \\
\cdashline{1-14}
GradCAM-E & 22.47 & 96.15 & 30.51 & \underline{79.60} & 27.18 & 37.65 & & 29.21 & \underline{97.49} & 24.99 & 79.54 & 27.98 & 34.25 \\
ShapleyCAM-E & \underline{22.41} & \underline{96.19} & 30.45 & \textbf{79.65} & 27.30 & 37.75 & & 29.23 & \textbf{97.50} & 24.98 & 79.54 & 27.97 & 34.24 \\
\cdashline{1-14}
HiResCAM & 25.07 & 90.44 & \underline{27.90} & 78.39 & 27.72 & 35.66 & & - & - & - & - & - & - \\
ShapleyCAM-H & 25.01 & 90.56 & \textbf{27.76} & 78.49 & 27.85 & 35.82 & & - & - & - & - & - & - \\
\cdashline{1-14}
GradCAM & 22.67 & 89.58 & 30.06 & 78.14 & 29.99 & 38.68 & & \textbf{24.95} & 96.44 & 25.98 & \underline{80.64} & \textbf{32.44} & \textbf{37.90} \\
ShapleyCAM & 22.68 & 89.69 & 29.86 & 78.25 & \underline{30.12} & \underline{38.76} & & \underline{25.02} & 96.89 & 26.01 & \textbf{80.70} & \underline{32.30} & \underline{37.83} \\
\hline
\end{tabular}
}
\end{table*}

\subsection{Evaluation Metrics for Explanations}

To comprehensively evaluate CAM-based explanations, we employ a set of metrics \cite{chattopadhay2018grad, poppi2021ADCC, liftcam2021jung} to measure different aspects of explanation quality. All these metrics quantify explanation performance based on changes in prediction confidence when regions highlighted by CAMs are focused on or masked, as well as by the visual quality of the explanations. Notably, we exclude certain localization metrics, such as Intersection over Union (IoU), as they focus on localization performance rather than explainability, as discussed in Section \ref{subsec:exp}. 

\begin{figure}[htbp]
    \centering
    \includegraphics[width=0.4\textwidth]{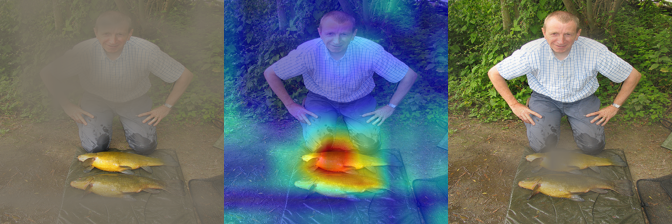} 
    \caption{Explanation map, visual explanation, and anti-explanation map generated by ShapleyCAM using the last convolutional layer of VGG-16}
    \label{fig:metric_visual}
\vspace{-0.2cm}
\end{figure}
To facilitate understanding of these metrics, we define some terminology: after a CAM method produces the normalized and upsampled heatmap \(\text{H}_c(x)\), it is linearly combined with the raw image \(x\) to obtain the \textit{visual explanation}. The \textit{explanation map} \(x \odot \text{H}_c(x)\) represents the Hadamard product of the heatmap and the raw image, masking unimportant pixels. Masking the important pixels yields the \textit{anti-explanation map} \(x \odot (1-\text{H}_c(x))\). An illustration of the \textit{explanation map}, \textit{visual explanation}, and \textit{anti-explanation map} is provided in Fig. \ref{fig:metric_visual}.

\begin{itemize}
    \item \textbf{Average Drop (AD) \cite{chattopadhay2018grad}} quantifies the average decline in model confidence for the target class when using only the explanation map instead of the full input image:
    \begin{equation}
        \text{AD} = \frac{1}{N} \sum_{i=1}^{N} \frac{\max(0, y_i^c - o_i^c)}{y_i^c} \times 100.
    \end{equation}
    where \( y_i^c \) and \( o_i^c \) are the model's post-softmax scores for class \( c \) with the full image and explanation map, respectively. A lower AD indicates that the explanation map captures the most relevant features, whose presence helps maintain the model's confidence.

    \item \textbf{Coherency (Coh) \cite{poppi2021ADCC}} assesses the consistency of the CAM method by calculating the normalized Pearson Correlation Coefficient between the original heatmap and the heatmap generated from the explanation map:
    \begin{equation}
        \text{Coh}(x) = \frac{1}{2}\frac{\mathrm{Cov}(\text{H}_c(x \odot \text{H}_c(x)), \text{H}_c(x))}{\sigma_{\text{H}_c(x \odot \text{H}_c(x))} \sigma_{\text{H}_c(x)}}+\frac{1}{2}.
    \end{equation}
    \item \textbf{Complexity (Com) \cite{poppi2021ADCC}} evaluates the simplicity of the explanation map by calculating the L1 norm:
    \begin{equation}
        \text{Complexity}(x) = \| \text{H}_c(x) \|_1.
    \end{equation}
    \item \textbf{Average DCC (ADCC) \cite{poppi2021ADCC}} is the harmonic mean of Average Drop, Coherency (averaged across all samples), and Complexity (averaged across all samples) to provide an overall measure of explanation quality: 
    \begin{equation}
        \text{ADCC}(x) = \frac{3} {\frac{1}{\text{Coh}(x)} + \frac{1}{1 - \text{Com}(x)} + \frac{1}{1 - \text{AD}(x)}}.
    \end{equation}
   A higher ADCC score reflects a balanced trade-off among preserving model confidence, providing consistent explanations, and ensuring simplicity.
    
    \item \textbf{Increase in Confidence (IC) \cite{chattopadhay2018grad}} measures the proportion of instances in which the model's confidence increases when using the explanation map instead of the full image, where \( \mathbb{I} \) is the indicator function:
    \begin{equation}
        \text{IC} = \frac{1}{N} \sum_{i=1}^{N} \mathbb{I}(y_i^c < o_i^c) \times 100.
    \end{equation}

    \item \textbf{Average Drop in Deletion (ADD) \cite{liftcam2021jung}} assesses the drop in confidence when using anti-explanation map:
    \begin{equation}
        \text{ADD} = \frac{1}{N} \sum_{i=1}^{N} \frac{\max(0, y_i^c - d_i^c)}{y_i^c} \times 100.
    \end{equation}
    Here, \( d_i^c \) is the model's post-softmax score for class \( c \) with the anti-explanation map. Higher ADD scores indicate effective identification of critical features, as removing these pixels leads to a significant decrease in confidence.
\end{itemize}

\begin{table*}[htbp]
\setlength{\tabcolsep}{.25em}
\renewcommand{\arraystretch}{1.05}
\centering
\caption{Evaluation of different CAM methods on Swin Transformers with the layer preceding the GAP layer as the target layer}
\label{tab:results1_swin}
\resizebox{0.7\textwidth}{!}{
\begin{tabular}{l cccccc cc cccccc}
\hline
& \multicolumn{6}{c}{\textbf{Swin-T (80.91\%)}} & & \multicolumn{6}{c}{\textbf{Swin-S (83.05\%)}} \\
\cmidrule{2-7} \cmidrule{9-14}
\textbf{Method} & AD $\downarrow$ & Coh $\uparrow$ & Com $\downarrow$ & \textbf{ADCC} $\uparrow$ & \textbf{IC} $\uparrow$ & \textbf{ADD} $\uparrow$ & & AD $\downarrow$ & Coh $\uparrow$ & Com $\downarrow$ & \textbf{ADCC} $\uparrow$ & \quad\textbf{IC} $\uparrow$ & \textbf{ADD} $\uparrow$ \\
\hline
RandomCAM & 73.23 & 60.03 & 16.93 & 45.42 & 2.66 & 15.27 & & 68.56 & 57.45 & 17.38 & 48.93 & 6.90 & 15.84 \\
GradCAM++ & 32.34 & 90.71 & 30.31 & 74.71 & 11.62 & \underline{39.28} & & 26.95 & 89.32 & \textbf{26.70} & \underline{77.87} & 26.77 & 31.46 \\
LayerCAM & 32.23 & 90.88 & 30.64 & 74.67 & 11.50 & \textbf{39.58} & & 26.89 & \textbf{89.65} & \underline{26.91} & \textbf{77.89} & 26.62 & 31.63 \\
\cdashline{1-14}
GradCAM-E & 32.52 & 88.31 & 33.13 & 73.00 & 10.12 & 39.00 & & 27.12 & 89.29 & 32.05 & 75.69 & 23.20 & \textbf{35.07} \\
ShapleyCAM-E & 32.52 & 88.50 & 33.11 & 73.05 & 10.15 & 38.95 & & 27.12 & \underline{89.43} & 32.00 & 75.74 & 23.24 & \underline{35.00} \\
\cdashline{1-14}
GradCAM & \textbf{31.89} & \underline{91.50} & \textbf{30.13} & \underline{75.14} & \textbf{11.80} & 39.11 & & \underline{26.28} & 88.97 & 28.12 & 77.49 & \textbf{26.95} & 32.65 \\
ShapleyCAM & \underline{31.92} & \textbf{91.88} & \underline{30.19} & \textbf{75.19} & \underline{11.69} & 39.06 & & \textbf{26.28} & 89.34 & 28.17 & 77.56 & \underline{26.93} & 32.62 \\
\hline
& \multicolumn{6}{c}{\textbf{Swin-B (84.71\%)}} & & \multicolumn{6}{c}{\textbf{Swin-L (85.83\%)}} \\
\cmidrule{2-7} \cmidrule{9-14}
\textbf{Method} & AD $\downarrow$ & Coh $\uparrow$ & Com $\downarrow$ & \textbf{ADCC} $\uparrow$ & \textbf{IC} $\uparrow$ & \textbf{ADD} $\uparrow$ & & AD $\downarrow$ & Coh $\uparrow$ & Com $\downarrow$ & \textbf{ADCC} $\uparrow$ & \quad\textbf{IC} $\uparrow$ & \textbf{ADD} $\uparrow$ \\
\hline
RandomCAM & 70.71 & 56.69 & 17.08 & 46.99 & 6.29 & 16.16 & & 70.95 & 57.30 & 16.47 & 46.99 & 3.47 & 14.04 \\
GradCAM++ & 27.68 & 85.99 & \textbf{28.96} & 75.88 & \underline{26.60} & 34.27 & & \underline{33.31} & \underline{88.00} & \textbf{25.44} & \underline{75.43} & 13.95 & 28.31 \\
LayerCAM & 27.57 & 86.18 & 29.24 & 75.87 & \textbf{26.63} & 34.54 & & \textbf{33.20} & \textbf{88.24} & \underline{25.64} & \textbf{75.47} & 13.82 & 28.49 \\
\cdashline{1-14}
GradCAM-E & 28.93 & 85.59 & 33.54 & 73.53 & 21.96 & \textbf{37.59} & & 35.55 & 87.80 & 30.57 & 72.63 & 10.49 & \textbf{32.20} \\
ShapleyCAM-E & 28.93 & 85.76 & 33.50 & 73.59 & 22.05 & \underline{37.54} & & 35.56 & 87.93 & 30.52 & 72.67 & 10.50 & \underline{32.14} \\
\cdashline{1-14}
GradCAM & \underline{27.52} & \underline{86.73} & \underline{29.19} & \underline{76.05} & 26.28 & 34.62 & & 33.91 & 87.37 & 26.06 & 74.81 & \textbf{14.07} & 29.06 \\
ShapleyCAM & \textbf{27.52} & \textbf{87.05} & 29.20 & \textbf{76.12} & 26.20 & 34.55 & & 33.91 & 87.68 & 26.05 & 74.89 & \underline{14.01} & 28.99 \\
\hline
\end{tabular}
}
\vspace{-0.1cm}
\end{table*}

These metrics aim to characterize various aspects of explanation quality. However, they may not fully capture the explainability of CAMs. For instance, these metrics can be misleading. As demonstrated by \cite{poppi2021ADCC}, a simplistic method like ``Fake-CAM", which assigns equal weights to nearly all pixels except for one, can attain nearly perfect scores for IC, AD, and Coh without delivering meaningful explanations. Therefore, we advocate for further research to develop more accurate and comprehensive evaluation methods in this domain.

In our experiments, we primarily focus on ADCC, IC, and ADD as the measures for evaluating CAM methods.

\subsection{Quantitative Comparison of CAM Methods Using the Layer Preceding GAP as the Target Layer}
\label{subsec:expriments}

Although the original CAM \cite{zhou2016learning} is often criticized for its heavy reliance on the GAP layer, this layer has become a fundamental component near the classifier in state-of-the-art networks, including ResNet, ResNeXt, EfficientNet, MobileNet, and even Swin Transformer. VGG-16, however, uses an adaptive average pooling layer that converts the activation maps into \(7 \times 7\) maps. 

Here, we would like to clarify that the layer preceding GAP is distinct from the last convolutional layer. Typically, a normalization layer and a ReLU function follow the last convolutional layer, placing it before the layer preceding GAP (i.e., ReLU layer). When using the layer preceding GAP as the target layer (Table \ref{tab:results1_other} and Table \ref{tab:results1_swin}), many CAM methods reduce to the same approach. However, when using the last convolutional layer for non-Swin Transformer networks (Table \ref{tab:results2_others}) and the first normalization layer of the last transformer block for Swin Transformers (Table \ref{tab:results2_swin}), they perform differently.

We begin by using the layer preceding GAP (i.e., SiLU layer for EfficientNet, LayerNorm layer for Swin Transformers, and ReLU layer for others) as the target layer for all networks except VGG-16. In this case, HiResCAM and XGradCAM are equivalent to GradCAM, as proven in \cite{jiang2021layercam, fu2020axiom}, and ShapleyCAM-H is also equivalent to ShapleyCAM, so we omit the HiResCAM, XGradCAM, and ShapleyCAM-H for brevity except VGG-16. For VGG-16, we select the layer preceding the adaptive averaging pooling layer (i.e., ReLU layer) as the target layer. Since ReLU ensures that the activation maps from all networks except the Swin Transformer and EfficientNet are non-negative, LayerCAM is equivalent to GradCAM-E (see Equations \eqref{eq:gradcam-e} and \eqref{eq:layercam}), so we exclude LayerCAM's results for these networks.

As shown in Table \ref{tab:results1_other}, ShapleyCAM achieves the highest ADCC score in ResNet-18, ResNet-50, ResNet-101, ResNet-152, ResNeXt-50, EfficientNet-B0, and MobileNet-V2. For VGG-16, ShapleyCAM-E has the best ADCC score. The IC and ADD scores for ShapleyCAM are also competitive, ranking first or second in most cases. 

In the case of the Swin Transformer as shown in Table \ref{tab:results1_swin}, ShapleyCAM has the best ADCC scores in both Swin-T and Swin-B, while LayerCAM and GradCAM++ achieve the best and second-best ADCC scores in Swin-S and Swin-L. GradCAM performs best in IC, while GradCAM-E excels in ADD. ShapleyCAM and ShapleyCAM-E also show comparable performance in IC and ADD, respectively.

In summary, when using the layer preceding GAP as the target layer, ShapleyCAM consistently outperforms other compared CAM methods in ADCC, IC, and ADD scores across various CNN architectures, and enjoys competitive performance on Swin Transformer architectures.

\begin{table*}[htbp]
\setlength{\tabcolsep}{.25em}
\renewcommand{\arraystretch}{1.05}
\centering
\caption{Evaluation of different CAM methods on eight different CNN backbones with the last convolutional layer as the target layer}
\label{tab:results2_others}
\resizebox{0.7\textwidth}{!}{
\begin{tabular}{l cccccc cc cccccc}
\hline
& \multicolumn{6}{c}{\textbf{ResNet-18 (69.76\%)}} & & \multicolumn{6}{c}{\textbf{ResNet-50 (76.13\%)}} \\
\cmidrule{2-7} \cmidrule{9-14}
\textbf{Method} & AD $\downarrow$ & Coh $\uparrow$ & Com $\downarrow$ & \textbf{ADCC} $\uparrow$ & \textbf{IC} $\uparrow$ & \textbf{ADD} $\uparrow$ & & AD $\downarrow$ & Coh $\uparrow$ & Com $\downarrow$ & \textbf{ADCC} $\uparrow$ & \quad\textbf{IC} $\uparrow$ & \textbf{ADD} $\uparrow$ \\
\hline
RandomCAM & 72.26 & 50.36 & 21.04 & 43.75 & 5.83 & 19.48 & & 67.85 & 52.75 & 20.28 & 47.92 & 7.52 & 14.26 \\
GradCAM++ & 28.75 & 97.55 & \underline{33.14} & 76.45 & 23.98 & 46.96 & & 21.52 & 96.07 & \underline{33.76} & 78.43 & 30.92 & 35.80 \\
XGradCAM & 48.36 & 67.52 & \textbf{28.19} & 62.37 & 13.18 & 31.47 & & 40.03 & 68.71 & \textbf{27.76} & 66.56 & 18.64 & 24.07 \\
LayerCAM & 23.33 & 98.27 & 38.52 & 75.98 & 27.61 & 48.70 & & 19.18 & 96.42 & 37.02 & 77.67 & 32.70 & 36.29 \\
\cdashline{1-14}
GradCAM-E & 23.26 & \underline{98.29} & 38.25 & 76.15 & 27.79 & 48.04 & & 18.51 & \underline{96.69} & 37.46 & 77.72 & 33.39 & 36.46 \\
ShapleyCAM-E & 23.27 & \textbf{98.30} & 38.19 & 76.17 & 27.81 & 48.04 & & 18.52 & \textbf{96.70} & 37.41 & 77.73 & 33.41 & 36.48 \\
\cdashline{1-14}
HiResCAM & 21.47 & 97.25 & 37.04 & \underline{77.12} & 30.48 & 49.39 & & 18.00 & 95.49 & 35.21 & \underline{78.74} & 34.80 & 37.35 \\
ShapleyCAM-H & 21.49 & 97.41 & 36.92 & \textbf{77.21} & 30.50 & 49.37 & & 18.06 & 95.69 & 35.12 & \textbf{78.80} & 34.85 & 37.36 \\
\cdashline{1-14}
GradCAM & \textbf{20.03} & 97.41 & 38.36 & 76.94 & \underline{31.92} & \textbf{51.07} & & \textbf{16.91} & 95.63 & 36.36 & 78.53 & \underline{36.21} & \underline{38.37} \\
ShapleyCAM & \underline{20.04} & 97.57 & 38.23 & 77.03 & \textbf{31.94} & \underline{51.07} & & \underline{16.98} & 95.81 & 36.26 & 78.59 & \textbf{36.25} & \textbf{38.38} \\
\hline
& \multicolumn{6}{c}{\textbf{ResNet-101 (77.38\%)}} & & \multicolumn{6}{c}{\textbf{ResNet-152 (78.32\%)}} \\
\cmidrule{2-7} \cmidrule{9-14}
\textbf{Method} & AD $\downarrow$ & Coh $\uparrow$ & Com $\downarrow$ & \textbf{ADCC} $\uparrow$ & \textbf{IC} $\uparrow$ & \textbf{ADD} $\uparrow$ & & AD $\downarrow$ & Coh $\uparrow$ & Com $\downarrow$ & \textbf{ADCC} $\uparrow$ & \quad\textbf{IC} $\uparrow$ & \textbf{ADD} $\uparrow$ \\
\hline
RandomCAM & 65.38 & 53.66 & 20.46 & 49.92 & 9.09 & 14.09 & & 64.38 & 51.87 & 20.86 & 50.01 & 9.57 & 13.16 \\
GradCAM++ & 17.19 & 96.32 & 37.83 & 77.83 & 35.93 & 36.05 & & 17.96 & 96.10 & \underline{35.76} & \textbf{78.61} & 34.40 & 33.59 \\
XGradCAM & 35.35 & 70.29 & \textbf{29.70} & 68.31 & 22.48 & 24.55 & & 33.37 & 70.58 & \textbf{30.43} & 68.89 & 23.18 & 23.62 \\
LayerCAM & 16.15 & 96.42 & 40.31 & 76.82 & 36.12 & 35.98 & & 16.22 & 96.32 & 38.93 & 77.53 & 35.40 & 33.76 \\
\cdashline{1-14}
GradCAM-E & 15.73 & \underline{96.86} & 40.40 & 76.98 & 36.36 & 36.30 & & 15.09 & \underline{97.03} & 40.04 & 77.40 & 36.27 & 34.38 \\
ShapleyCAM-E & 15.74 & \textbf{96.87} & 40.37 & 77.00 & 36.40 & 36.32 & & 15.11 & \textbf{97.04} & 40.01 & 77.41 & 36.28 & 34.40 \\
\cdashline{1-14}
HiResCAM & 15.45 & 95.28 & 37.78 & \underline{78.14} & 38.22 & 36.89 & & 15.02 & 95.51 & 37.78 & 78.31 & 37.75 & 35.01 \\
ShapleyCAM-H & 15.50 & 95.46 & \underline{37.70} & \textbf{78.21} & 38.26 & 36.91 & & 15.04 & 95.67 & 37.72 & \underline{78.37} & 37.78 & 35.04 \\
\cdashline{1-14}
GradCAM & \textbf{14.59} & 95.46 & 38.97 & 77.78 & \underline{39.38} & \underline{37.56} & & \textbf{14.11} & 95.72 & 38.86 & 78.03 & \underline{39.09} & \underline{35.99} \\
ShapleyCAM & \underline{14.63} & 95.63 & 38.89 & 77.85 & \textbf{39.40} & \textbf{37.58} & & \underline{14.14} & 95.87 & 38.79 & 78.09 & \textbf{39.11} & \textbf{36.02} \\
\hline
& \multicolumn{6}{c}{\textbf{ResNeXt-50 (77.62\%)}} & & \multicolumn{6}{c}{\textbf{MobileNet-V2 (71.88\%)}} \\
\cmidrule{2-7} \cmidrule{9-14}
\textbf{Method} & AD $\downarrow$ & Coh $\uparrow$ & Com $\downarrow$ & \textbf{ADCC} $\uparrow$ & \textbf{IC} $\uparrow$ & \textbf{ADD} $\uparrow$ & & AD $\downarrow$ & Coh $\uparrow$ & Com $\downarrow$ & \textbf{ADCC} $\uparrow$ & \quad\textbf{IC} $\uparrow$ & \textbf{ADD} $\uparrow$ \\
\hline
RandomCAM & 64.53 & 47.87 & 21.57 & 48.52 & 9.16 & 13.87 & & 72.05 & 54.25 & 20.44 & 44.93 & 5.44 & 18.03 \\
GradCAM++ & 18.97 & 95.99 & \underline{35.70} & \textbf{78.31} & 32.34 & 34.30 & & 19.12 & 98.40 & 43.62 & 74.51 & 30.45 & 51.40 \\
XGradCAM & 31.89 & 70.94 & \textbf{32.68} & 68.75 & 23.51 & 25.54 & & 55.12 & 63.76 & \textbf{26.07} & 58.26 & 9.27 & 25.13 \\
LayerCAM & 16.61 & 96.79 & 39.50 & 77.22 & 33.98 & 35.19 & & 16.41 & \underline{98.66} & 49.23 & 71.78 & 33.43 & \underline{58.81} \\
\cdashline{1-14}
GradCAM-E & 15.88 & \underline{96.87} & 40.48 & 76.90 & 34.78 & 34.90 & & 16.41 & \textbf{98.66} & 49.23 & 71.78 & 33.43 & \textbf{58.81} \\
ShapleyCAM-E & 15.89 & \textbf{96.88} & 40.42 & 76.93 & 34.84 & 34.91 & & 16.43 & 98.66 & 49.15 & 71.83 & 33.47 & 58.76 \\
\cdashline{1-14}
HiResCAM & 14.60 & 96.09 & 40.03 & 77.33 & 37.08 & 36.66 & & \textbf{15.90} & 97.35 & 44.19 & 74.85 & \underline{36.10} & 55.59 \\
ShapleyCAM-H & 14.60 & 96.24 & 39.92 & \underline{77.43} & 37.10 & 36.69 & & \underline{15.94} & 97.44 & 43.91 & 75.02 & \textbf{36.19} & 55.44 \\
\cdashline{1-14}
GradCAM & \underline{13.69} & 96.25 & 40.75 & 77.21 & \textbf{38.34} & \underline{37.93} & & 16.97 & 97.69 & 42.15 & \underline{75.83} & 33.84 & 51.55 \\
ShapleyCAM & \textbf{13.68} & 96.39 & 40.63 & 77.31 & \underline{38.32} & \textbf{37.96} & & 16.96 & 97.80 & \underline{41.94} & \textbf{75.97} & 34.03 & 51.54 \\
\hline
& \multicolumn{6}{c}{\textbf{VGG-16 (71.59\%)}} & & \multicolumn{6}{c}{\textbf{EfficientNet-B0 (77.69\%)}} \\
\cmidrule{2-7} \cmidrule{9-14}
\textbf{Method} & AD $\downarrow$ & Coh $\uparrow$ & Com $\downarrow$ & \textbf{ADCC} $\uparrow$ & \textbf{IC} $\uparrow$ & \textbf{ADD} $\uparrow$ & & AD $\downarrow$ & Coh $\uparrow$ & Com $\downarrow$ & \textbf{ADCC} $\uparrow$ & \quad\textbf{IC} $\uparrow$ & \textbf{ADD} $\uparrow$ \\
\hline
RandomCAM & 74.95 & 50.79 & 16.42 & 41.91 & 4.93 & 14.38 & & 67.14 & 58.09 & 19.31 & 49.96 & 9.56 & 16.76 \\
GradCAM++ & 67.26 & 61.72 & 21.52 & 50.43 & 5.91 & 17.44 & & \underline{16.91} & 94.88 & 42.04 & 75.32 & 37.78 & 43.68 \\
XGradCAM & 77.24 & 61.09 & 20.61 & 41.15 & 2.94 & 13.94 & & 52.40 & 63.51 & \textbf{25.12} & 59.87 & 15.65 & 22.33 \\
LayerCAM & 57.16 & 87.47 & 16.78 & 64.11 & 8.04 & 24.48 & & 21.58 & 97.99 & 32.87 & 79.25 & 33.32 & 40.00 \\
\cdashline{1-14}
GradCAM-E & 57.16 & 87.47 & 16.78 & 64.11 & 8.04 & 24.48 & & 21.29 & \textbf{98.03} & 33.24 & 79.19 & 33.49 & 40.28 \\
ShapleyCAM-E & \underline{57.14} & \textbf{87.57} & 16.74 & \underline{64.16} & \underline{8.09} & \underline{24.56} & & 23.02 & \underline{97.99} & 31.06 & 79.57 & 32.00 & 39.11 \\
\cdashline{1-14}
HiResCAM & 73.26 & 77.46 & \underline{12.34} & 48.62 & 3.86 & 23.13 & & 19.25 & 96.73 & 31.63 & \underline{80.32} & 37.49 & 43.15 \\
ShapleyCAM-H & 73.27 & 77.91 & \textbf{12.29} & 48.66 & 3.84 & 23.23 & & 21.14 & 97.05 & \underline{29.45} & \textbf{80.74} & 35.56 & 41.58 \\
\cdashline{1-14}
GradCAM & 76.68 & 69.28 & 12.92 & 43.60 & 5.43 & 22.98 & & \textbf{16.78} & 95.76 & 38.38 & 77.54 & \textbf{39.00} & \textbf{44.50} \\
ShapleyCAM & \textbf{31.30} & \underline{87.52} & 25.32 & \textbf{76.20} & \textbf{22.39} & \textbf{35.94} & & 17.12 & 95.10 & 40.41 & 76.22 & \underline{38.23} & \underline{44.32} \\
\hline
\end{tabular}
}
\end{table*}

\begin{table*}[htbp]
\setlength{\tabcolsep}{.25em}
\renewcommand{\arraystretch}{1.05}
\centering
\caption{Evaluation of different CAM methods on Swin Transformers with the first normalization layer of the last transformer block as the target layer}
\label{tab:results2_swin}
\resizebox{0.7\textwidth}{!}{
\begin{tabular}{l cccccc cc cccccc}
\hline
& \multicolumn{6}{c}{\textbf{Swin-T (80.91\%)}} & & \multicolumn{6}{c}{\textbf{Swin-S (83.05\%)}} \\
\cmidrule{2-7} \cmidrule{9-14}
\textbf{Method} & AD $\downarrow$ & Coh $\uparrow$ & Com $\downarrow$ & \textbf{ADCC} $\uparrow$ & \textbf{IC} $\uparrow$ & \textbf{ADD} $\uparrow$ & & AD $\downarrow$ & Coh $\uparrow$ & Com $\downarrow$ & \textbf{ADCC} $\uparrow$ & \quad\textbf{IC} $\uparrow$ & \textbf{ADD} $\uparrow$ \\
\hline
RandomCAM & 70.38 & 54.73 & 19.49 & 46.54 & 3.12 & 16.60 & & 66.28 & 53.44 & 20.58 & 49.21 & 6.54 & 17.74 \\
GradCAM++ & 86.45 & 42.36 & 14.48 & 27.50 & 1.32 & 10.36 & & 80.49 & 56.47 & 17.95 & 36.97 & 3.39 & 12.00 \\
XGradCAM & 76.20 & 54.98 & 16.27 & 41.58 & 2.80 & 13.79 & & 74.16 & 53.28 & 17.02 & 43.16 & 5.22 & 14.57 \\
LayerCAM & 89.14 & 64.69 & \textbf{7.79} & 25.34 & 1.52 & 11.44 & & 89.86 & 61.23 & \underline{10.90} & 23.78 & 1.47 & 10.73 \\
\cdashline{1-14}
GradCAM-E & \underline{52.51} & \textbf{84.61} & 21.63 & \textbf{65.74} & \underline{4.75} & \textbf{25.73} & & \textbf{45.15} & \textbf{82.74} & 23.42 & \textbf{69.16} & \underline{14.49} & \textbf{26.40} \\
ShapleyCAM-E & \textbf{52.28} & \underline{83.97} & 21.87 & \underline{65.70} & \textbf{4.95} & \underline{25.24} & & \underline{46.50} & \underline{82.35} & 21.66 & \underline{68.81} & \textbf{15.11} & \underline{24.60} \\
\cdashline{1-14}
HiResCAM & 89.56 & 66.46 & \underline{9.01} & 24.62 & 1.39 & 12.42 & & 90.59 & 63.06 & \textbf{8.92} & 22.54 & 1.44 & 10.63 \\
ShapleyCAM-H & 76.08 & 61.83 & 14.29 & 43.08 & 2.48 & 15.30 & & 79.52 & 64.06 & 13.46 & 39.48 & 3.82 & 14.98 \\
\cdashline{1-14}
GradCAM & 85.97 & 49.57 & 16.65 & 28.99 & 1.30 & 10.58 & & 82.00 & 57.43 & 19.02 & 35.16 & 3.01 & 12.43 \\
ShapleyCAM & 74.69 & 49.73 & 29.73 & 40.63 & 2.96 & 17.15 & & 64.84 & 56.33 & 33.33 & 49.03 & 7.22 & 17.13 \\
\hline
& \multicolumn{6}{c}{\textbf{Swin-B (84.71\%)}} & & \multicolumn{6}{c}{\textbf{Swin-L (85.83\%)}} \\
\cmidrule{2-7} \cmidrule{9-14}
\textbf{Method} & AD $\downarrow$ & Coh $\uparrow$ & Com $\downarrow$ & \textbf{ADCC} $\uparrow$ & \textbf{IC} $\uparrow$ & \textbf{ADD} $\uparrow$ & & AD $\downarrow$ & Coh $\uparrow$ & Com $\downarrow$ & \textbf{ADCC} $\uparrow$ & \quad\textbf{IC} $\uparrow$ & \textbf{ADD} $\uparrow$ \\
\hline
RandomCAM & 66.78 & 51.94 & 22.08 & 48.24 & 5.68 & 19.51 & & 67.07 & 52.01 & 21.82 & 48.09 & 3.21 & 17.69 \\
GradCAM++ & 87.02 & 39.49 & 12.36 & 26.37 & 1.70 & 9.75 & & 80.84 & 51.33 & 16.81 & 35.84 & 1.26 & 12.29 \\
XGradCAM & 70.83 & 53.77 & 19.21 & 45.97 & 5.35 & 16.24 & & 72.15 & 52.59 & 18.76 & 44.62 & 3.32 & 13.86 \\
LayerCAM & 90.79 & 53.90 & \textbf{8.75} & 21.72 & 1.06 & 9.30 & & 88.70 & 63.60 & \textbf{8.73} & 26.05 & 1.20 & 8.25 \\
\cdashline{1-14}
GradCAM-E & \underline{49.88} & \textbf{79.29} & 21.46 & \underline{66.23} & \underline{14.91} & \underline{25.18} & & \underline{52.67} & \textbf{81.61} & 19.61 & \underline{65.47} & \underline{8.70} & \textbf{20.89} \\
ShapleyCAM-E & \textbf{47.90} & \underline{79.28} & 23.00 & \textbf{66.97} & \textbf{16.04} & \textbf{26.27} & & \textbf{50.20} & \underline{81.37} & 18.79 & \textbf{67.14} & \textbf{10.86} & \underline{19.74} \\
\cdashline{1-14}
HiResCAM & 89.57 & 57.61 & \underline{9.92} & 24.14 & 1.04 & 10.94 & & 86.57 & 60.99 & \underline{9.77} & 29.43 & 1.42 & 8.98 \\
ShapleyCAM-H & 79.24 & 59.72 & 13.71 & 39.21 & 3.31 & 14.09 & & 76.87 & 63.28 & 13.16 & 42.52 & 2.81 & 12.56 \\
\cdashline{1-14}
GradCAM & 87.08 & 48.83 & 12.53 & 27.44 & 1.58 & 10.15 & & 81.60 & 48.70 & 17.47 & 34.48 & 1.58 & 11.45 \\
ShapleyCAM & 64.81 & 51.65 & 29.36 & 48.44 & 6.16 & 19.20 & & 72.80 & 50.45 & 25.80 & 42.82 & 2.31 & 16.40 \\
\hline
\end{tabular}
}
\end{table*}

\subsection{Quantitative Comparison of CAM Methods Using the Last Convolutional Layer as the Target Layer}
\label{subsec:expriments_extra}

Table \ref{tab:results2_others} presents the results using the last convolutional layer as the target layer for the non-Swin Transformer networks. In non-Swin Transformer networks, ShapleyCAM-H achieves the best or second-best ADCC score, while ShapleyCAM and ShapleyCAM-H perform well in the IC score. ShapleyCAM and GradCAM also demonstrate strong performance in ADD.

Table \ref{tab:results2_swin} presents the results using the first normalization layer (i.e., the LayerNorm layer) of the last transformer block as the target layer, following the suggestion from \cite{jacobgilpytorchcam}, for Swin Transformers. In these cases, compared to using the layer preceding GAP as the target layer (i.e., another LayerNorm layer) in Table \ref{tab:results1_swin}, CAM methods other than ShapleyCAM-E and GradCAM-E fail to outperform RandomCAM in ADCC, IC, and ADD scores. This observation may be attributed to the challenges inherent in using polynomial functions to approximate the highly complex self-attention mechanism. However, as discussed in Section \ref{subsec:exp}, LayerCAM, GradCAM++, GradCAM-E, and ShapleyCAM-E tend to consistently localize the foreground, serving as good localizers rather than reliable explainers. Thus, we maintain skepticism regarding the explainability of these methods, despite some of them demonstrating strong performance in specific metrics. We leave the investigation of this phenomenon to future research.

In summary, ShapleyCAM and ShapleyCAM-H generally outperform other methods in ADCC, IC, and ADD scores for non-Swin Transformer networks. The case for Swin Transformers is complex; it is difficult to determine which CAM method consistently outperforms others, although ShapleyCAM-E and ShapleyCAM frequently yield competitive results. 

Furthermore, as shown in Tables \ref{tab:results1_other}, \ref{tab:results1_swin}, \ref{tab:results2_others} and \ref{tab:results2_swin}, ShapleyCAM, ShapleyCAM-H, and ShapleyCAM-E outperform GradCAM, HiResCAM, and GradCAM-E across the ADCC, IC, and ADD scores in most cases. This outcome substantiates the effectiveness of the second-order approximation of the utility function, highlighting that incorporating both the gradient and Hessian matrix of neural networks typically leads to more accurate and reliable explanations, thereby confirming the validity of the proposed CRG Explainer.


\subsection{Qualitative Comparison of CAM Methods}
\label{subsec:qualitive_expriments}

In our qualitative analysis, we use VGG-16 as the backbone, following \cite{jiang2021layercam}, and apply the last convolutional layer along with the ReST utility function to generate visual explanations. Images are randomly selected from ILSVRC2012, encompassing single objects, multiple objects of the same label, and multiple objects with different labels.

As shown in Fig. \ref{fig:exp}, when presented with images containing a single object (see the first three rows), ShapleyCAM generates the most complete and accurate explanations, highlighting the faces of the \textit{axolotl}, \textit{great grey owl}, and \textit{ostrich}, while all other methods fail to provide complete explanations.

When presented with images containing multiple objects of the same label (see the middle three rows), GradCAM struggles to identify all the objects and highlights incorrect regions. XGradCAM and GradCAM++ produce relatively chaotic explanations, while HiResCAM, LayerCAM, GradCAM-E, ShapleyCAM, ShapleyCAM-H, and ShapleyCAM-E are able to identify all the relevant objects accurately.

When presented with images containing multiple objects with different labels (see the last three rows), methods such as LayerCAM, XGradCAM, GradCAM-E, GradCAM++, and ShapleyCAM-E sometimes highlight irrelevant foreground elements. ScoreCAM occasionally fails to produce meaningful explanations, likely due to the change in the utility function from pre-softmax to ReST. ShapleyCAM, however, generates a complete explanation without highlighting irrelevant regions. 

In summary, ShapleyCAM provides broader and more accurate explanations, effectively highlighting the relevant objects within the images while avoiding the highlighting of irrelevant regions.


\begin{figure*}[htbp]
    \centering
    \includegraphics[width=1.0\textwidth]{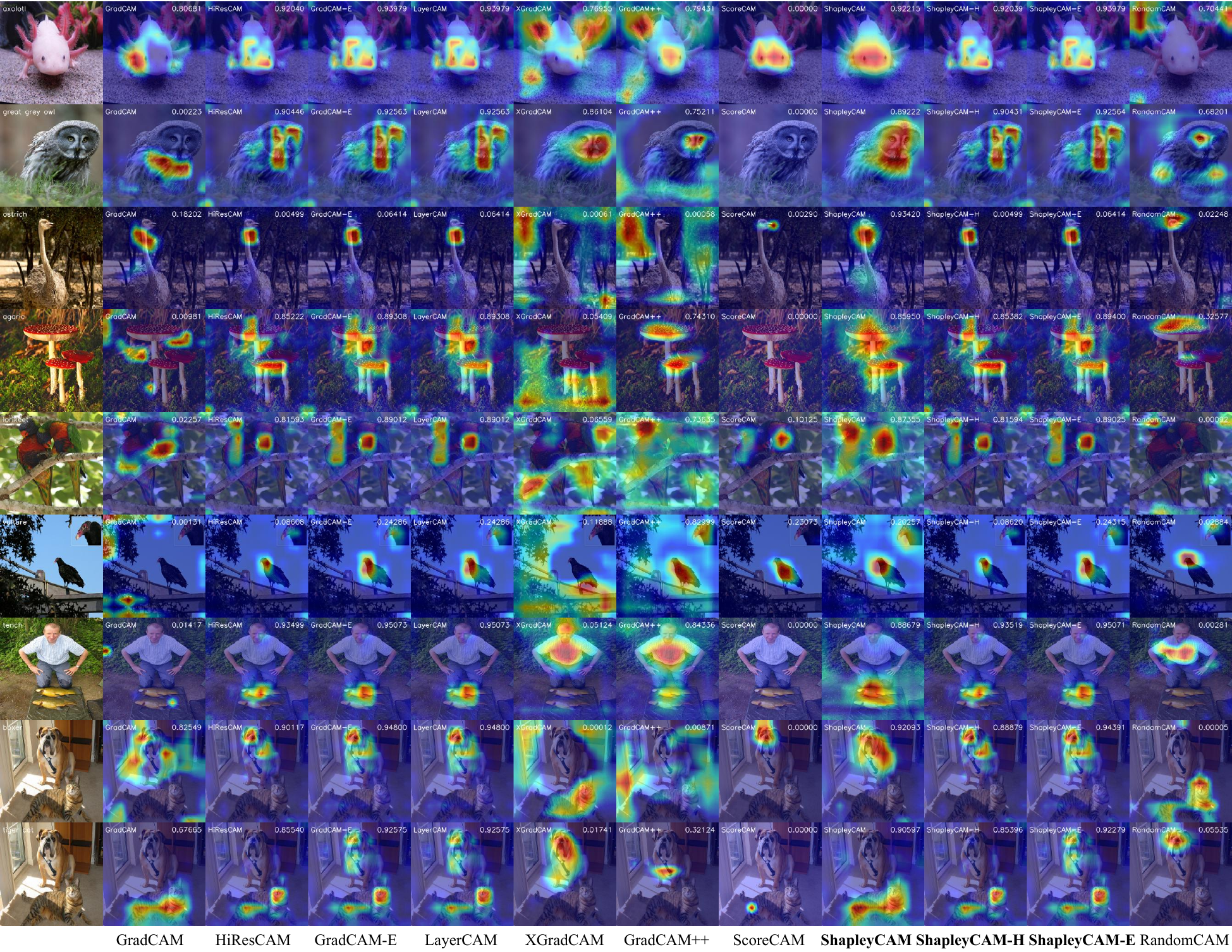} 
\caption{Visual explanation generated by CAMs on VGG-16 using ReST with the last convolutional layer. The labels are in the top left of the first column}
    \label{fig:exp}
\end{figure*}

\begin{figure}[htbp]
    \centering
    \includegraphics[width=0.35\textwidth]{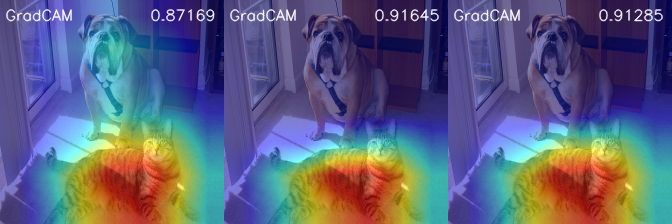} 
    \includegraphics[width=0.35\textwidth]{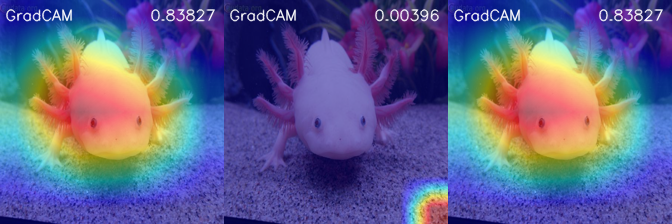} 
    \parbox{0.28\textwidth}{
        \centering
        \hspace{-0.3cm}\footnotesize Pre-softmax \hfill Post-softmax \hfill \textbf{ReST}
    }
    \caption{GradCAM on ResNet-18 using pre-softmax, post-softmax, and ReST with the layer preceding GAP as the target layer. The top row's target class is \textit{tiger cat}, where pre-softmax incorrectly highlights \textit{boxer}. The bottom row shows an image where ResNet-18 predicts \textit{axolotl} with a probability greater than $1 - 10^{-15}$, where post-softmax fails due to gradient vanishing. ADCC scores are shown, with higher values indicating better performance}
    \label{fig:softmax}
\end{figure}

\subsection{Ablation Study of ReST Utility Function}

To verify the effectiveness of the proposed ReST utility function, we conduct an ablation study by comparing the quantitative results of different utility functions on ResNet-18, using GradCAM with the layer preceding the GAP as the target layer and the predicted label as the target class. The qualitative comparison results can be found in Fig. \ref{fig:softmax}. 

As shown in Table \ref{tab:performance_metrics}, ReST outperforms pre-softmax and post-softmax scores, except in Complexity, which also negatively impacts ADCC. The heatmaps from pre-softmax and post-softmax scores may complement each other, as discussed in Section \ref{subsec:softmax}. Since GradCAM with ReST generates a heatmap similar to the combination of pre and post-softmax heatmaps (see Equation \eqref{eq:ReST}), ReST offers improved explainability compared to using only pre-softmax or post-softmax scores, though it may result in more complex explanations.

\begin{table*}[htbp]
\centering
\caption{Performance Comparisons of Different Utility Functions on ResNet-18 with GradCAM}
\label{tab:performance_metrics}
\resizebox{0.5\textwidth}{!}{ 
\begin{tabular}{lcccccc}
    \hline
    \textbf{Method} & AD $\downarrow$ & Coh $\uparrow$ & Com $\downarrow$ & \textbf{ADCC} $\uparrow$ & \textbf{IC} $\uparrow$ & \textbf{ADD} \\
    \hline
    Pre-softmax & \underline{30.20} & 95.93 & \textbf{27.58} & \textbf{77.80} & \underline{23.72} & \underline{42.20} \\
    Post-softmax & 32.55 & \underline{96.23} & \underline{28.87} & 76.38 & 21.41 & 38.89 \\
    ReST         & \textbf{18.71} & \textbf{97.65} & 38.68 & \underline{77.22} & \textbf{33.92} & \textbf{51.90}\\
    \hline
\end{tabular}
}
\end{table*}

\section{Discussion and Future work}
\label{subsec:exp}

The original CAM paper \cite{zhou2016learning} proposes CAM as a localization tool rather than a method for explainability. However, many subsequent works evaluate an explainer's effectiveness based on its localization performance \cite{selvaraju2017grad,jiang2021layercam}, a flawed criterion \cite{hires2020rachel}. Here, we propose to distinguish between \textbf{explainability} and \textbf{localization ability}.

As illustrated in Fig. \ref{fig:main}, methods such as LayerCAM, GradCAM++, GradCAM-E, and ShapleyCAM-E—which apply the ReLU operation before summation—tend to highlight prominent foreground regions regardless of the target class. For instance, when \textit{tiger cat} (or \textit{boxer}) is used as the target class, these methods still highlight regions belonging to \textit{boxer} (or \textit{tiger cat}), even though these areas are irrelevant to the target class. In contrast, methods like GradCAM and ShapleyCAM are more precise, accurately focusing on the regions corresponding to the target class while avoiding other irrelevant foregrounds.

To illustrate this further, we use \textit{yellow lady's slipper} (the least likely class predicted by ResNet-18) as the target class. In this scenario, LayerCAM, GradCAM++, GradCAM-E, and ShapleyCAM-E still emphasize regions corresponding to the true labels (\textit{tiger cat} and \textit{boxer}), revealing a fundamental flaw in their design: these CAM methods are not truly explaining the network's prediction. Instead, they appear to focus on the most obvious foreground objects in the image, regardless of their relevance to the target class. Conversely, other methods like GradCAM demonstrate a more desirable behavior by accurately highlighting the appropriate regions—essentially nothing—which serves as a proper explanation for the neural network's ``absurd" decision-making proccess. 

This observation underscores a critical insight: a good explainer should focus not only on localization but on faithfully reflecting the reasoning behind the model's decision-making process.
Thus, we argue that explainability and localization ability should be distinguished. Methods like LayerCAM, GradCAM++, GradCAM-E, and ShapleyCAM-E sometimes rank highly in quantitative experiments, as in Section \ref{subsec:expriments_extra}.  However, this improvement may stem from their effective exploitation of the localization ability emerging from neural network training for classification task \cite{zhou2016learning}, rather than from a deeper understanding of their internal mechanism. Consequently, using localization-based metrics to assess explainability can be misleading. A true explainer should accurately reflect the model’s inference process and highlight different aspects of the image when using various target classes. From this perspective, visual explanations aligned with human logic remain a more reliable metric for assessing CAMs. We also advocate for further research to develop more precise and comprehensive quantitative metrics for measuring the explainability of CAM methods.



\section{Conclusion}

To enhance the understanding of CAM methods and develop new CAM  methods with improved explainability, this study revisits CAMs from a decision-making perspective. First, we introduce the CRG Explainer to clarify the theoretical foundations of GradCAM and HiResCAM by connecting them to the Shapley value. Then, within the framework, we develop ShapleyCAM, which utilizes gradient and Hessian matrix information for improved heatmap precision. Next, for the choice of the utility function, we analyze the advantages and limitations of pre and post-softmax scores on the explanations, reveal their relationship, and propose the ReST utility function to overcome these limitations. Finally, we validate the effectiveness of ShapleyCAM and its variants through extensive quantitative experiments conducted across 12 network architectures and 6 metrics, utilizing 2 types of target layers. For future work, we emphasize the distinction between explainability and localization ability, calling for further research to establish more precise and comprehensive metrics for evaluating the explainability of CAM methods.

\bibliography{refs}

\end{document}